\def\eqref#1{equation~\ref{#1}}
\def\1{\bm{1}}
\DeclareMathAlphabet{\mathsfit}{\encodingdefault}{\sfdefault}{m}{sl}
\SetMathAlphabet{\mathsfit}{bold}{\encodingdefault}{\sfdefault}{bx}{n}
\newtheorem{theorem}{Theorem}
\newtheorem{lemma}[theorem]{Lemma}
\newtheorem{corollary}[theorem]{Corollary}
\newtheorem{proposition}[theorem]{Proposition}
\DeclareMathOperator{\sech}{sech}
\title{Robust Weight Initialization for Tanh Neural Networks with Fixed Point Analysis}
\author{Hyunwoo Lee$^1$,\: Hayoung Choi$^1$$^*$,\: Hyunju Kim$^2$\thanks{Corresponding authors.} \\
\textsuperscript{1}Kyungpook National University, \:
\textsuperscript{2}Korea Institute of Energy Technology\\
\textsuperscript{1}\texttt{\{lhw908, hayoung.choi\}@knu.ac.kr}, \:
\textsuperscript{2}\texttt{hjkim@kentech.ac.kr} 
}
\begin{document}

\maketitle

\begin{abstract}

As a neural network's depth increases, it can improve generalization performance. However, training deep networks is challenging due to gradient and signal propagation issues. To address these challenges, extensive theoretical research and various methods have been introduced. Despite these advances, effective weight initialization methods for tanh neural networks remain insufficiently investigated. This paper presents a novel weight initialization method for neural networks with tanh activation function. Based on an analysis of the fixed points of the function $\tanh(ax)$, the proposed method aims to determine values of $a$ that mitigate activation saturation. A series of experiments on various classification datasets and physics-informed neural networks demonstrates that the proposed method outperforms Xavier initialization methods~(with or without normalization) in terms of robustness across different network sizes, data efficiency, and convergence speed. Code is available at \href{https://github.com/1HyunwooLee/Tanh-Init}{\texttt{https://github.com/1HyunwooLee/Tanh-Init}}.

\end{abstract}

\section{Introduction}
Deep learning has significantly advanced state-of-the-art performance across various domains~\citep{lecun2015deep, he2016deep}. The expressivity of neural networks increases exponentially with depth, enhancing generalization performance~\citep{poole2016exponential, raghu2017expressive}. However, deeper networks often face gradient and signal propagation issues~\citep{bengio1993problem}. These challenges have driven the development of effective weight initialization methods designed for various activation functions. Xavier initialization~\citep{glorot2010understanding} prevents saturation in sigmoid and tanh activations, while He initialization~\citep{he2015delving} stabilizes variance for ReLU networks. Especially in ReLU neural networks, several weight initialization methods have been proposed to mitigate the dying ReLU problem, which hinders signal propagation in deep networks~\citep{lu2019dying, lee2024improved}. However, to the best of our knowledge, research on initialization methods that are robust across different sizes of tanh networks is underexplored. 
Tanh networks commonly employ Xavier initialization~\citep{raissi2019physics, jagtap2022physics, rathore2024challenges} and are applied in various domains, such as Physics-Informed Neural Networks~(PINNs)~\citep{raissi2019physics} and Recurrent Neural Networks~(RNNs)~\citep{rumelhart1986learning}, with performance often dependent on model size and initialization randomness~\citep{liu2022novel}.

The main contribution of this paper is the proposal of a simple weight initialization method for FeedForward Neural Networks~(FFNNs) with tanh activation function. 
The proposed method is data-efficient and demonstrates robustness across different network sizes. Moreover, it reduces dependency on normalization techniques such as Batch Normalization~(BN)~\citep{ioffe2015batch} and Layer Normalization~(LN)~\citep{ba2016layer}. As a result, it alleviates the need for extensive hyperparameter tuning, such as selecting the number of hidden layers and units, while also eliminating the computational overhead associated with normalization. The theoretical foundation of this approach is based on the fixed point of the function $\tanh(ax)$. We evaluate the proposed method on two tasks: classification and PINNs. For classification tasks, we assess its performance across various FFNN sizes using standard benchmark datasets. The results demonstrate improved validation accuracy and lower loss compared to Xavier initialization with BN or LN. For PINNs, the method exhibits robustness across diverse network sizes and demonstrates its effectiveness in solving a wide range of PDE problems. Notably, for both tasks, the proposed method outperforms Xavier initialization in data efficiency; that is, it achieves improved performance even with limited data. Our main contributions can be summarized as follows:

\begin{itemize}
    \item We identify the conditions under which activation values do not vanish as the neural network depth increases, using a fixed-point analysis~(Section~\ref{3.1} and \ref{3.2}).

    \item We propose a novel weight initialization method for tanh neural networks that is robust across different network sizes and demonstrates high data efficiency~(Section~\ref{3.2} and \ref{3.3}).

    \item We experimentally show that the proposed method is more robust across different network sizes on image benchmarks and PINNs~(Section~\ref{section4}).

    \item We experimentally show that the proposed method is more data-efficient than Xavier initialization, with or without normalization, on image benchmarks and PINNs~(Section~\ref{section4}).

\end{itemize}

\section{Related works}

The expressivity of neural networks grows exponentially with depth, resulting in improved generalization performance~\citep{poole2016exponential, raghu2017expressive}. Proper weight initialization is crucial for effectively training deep networks~\citep{andrew2014exact, Dmytro2016All}. Xavier~\citep{glorot2010understanding} and He~\cite{he2015delving} initialization are common initialization methods typically used with tanh and ReLU activation functions, respectively. Several initialization methods have been proposed to improve the training of deep ReLU networks~\citep{lu2019dying, pmlr-v161-bachlechner21a, zhao2022zero, lee2024improved}. However, research on weight initialization for tanh-based neural networks remains relatively limited.
Nevertheless, tanh networks have gained increasing attention in recent years, particularly in applications such as PINNs, where their performance is highly sensitive to initialization randomness. 

PINNs have shown promising results in solving forward, inverse, and multiphysics problems in science and engineering.~\citep{MAO2020112789, Shukla2020, lu2021learning, karniadakis2021physics, YIN2021113603, BARARNIA2022105890, Cuomo2022, HANNA2022115100, HOSSEINI2023125908, Wu2023TheAO, Zhu2024}. 
PINNs approximate solutions to partial differential equations~(PDEs) using neural networks and are trained by minimizing a loss function, typically formulated as a sum of least-squares terms incorporating PDE residuals, boundary conditions, and initial conditions. This loss is commonly optimized using gradient-based methods such as Adam~\citep{kingma2014adam}, L-BFGS~\citep{liu1989limited}, or a combination of both. 
Universal approximation theorems~\citep{Cybenko1989, HORNIK1989359, HORNIK1991251, DBLP:journals/corr/abs-2006-08859, GULIYEV2018296, SHEN2022101, GULIYEV2018262, MAIOROV199981, YAROTSKY2017103, GRIPENBERG2003260} establish the theoretical capability of neural networks to approximate analytic PDE solutions. However, PINNs still face challenges in accuracy, stability, computational complexity, and tuning appropriate hyperparameters of loss terms. 

To address these challenges, various improved variants of PINNs have been proposed: 
(1)~Self-adaptive loss-balanced PINNs~(lbPINNs), which automatically adjust the hyperparameters of loss terms during training~\citep{XIANG202211},
(2)~Bayesian PINNs~(B-PINNs), designed to handle forward and inverse nonlinear problems with noisy data~\citep{YANG2021109913},
(3)~Rectified PINNs~(RPINNs), which incorporate gradient information from numerical solutions obtained via the multigrid method and are specifically designed for solving stationary PDEs~\citep{PENG2022105583},
(4)~Auxiliary PINNs~(A-PINNs), developed to effectively handle integro-differential equations~\citep{YUAN2022111260},
(5)~Conservative PINNs~(cPINNs) and Extended PINNs (XPINNs), which employ domain decomposition techniques~\citep{JAGTAP2020113028, Jagtap2020ExtendedPN},
(6)~Parallel PINNs, designed to reduce the computational cost of cPINNs and XPINNs~\citep{SHUKLA2021110683}, and
(7)~Gradient-enhanced PINNs~(gPINNs), which incorporate the gradient of the PDE loss term with respect to network inputs~\citep{YU2022114823}.

While these advancements address various challenges in PINNs, activation functions and their initialization strategies remain crucial for achieving optimal performance.The tanh activation function has been shown to perform well in PINNs~\citep{raissi2019physics}, with detailed experimental results provided in Appendix~\ref{act}.
Xavier initialization is widely employed as the standard approach for tanh networks in existing studies~\citep{jin2021nsfnets, son2023enhanced, yao2023multiadam, gnanasambandam2023self, song2024loss}.
However, our experimental results indicate that employing Xavier initialization leads to decreased model performance as network size increases. Moreover, performance gains from Batch Normalization or Layer Normalization remain limited, and Xavier initialization exhibits sensitivity to the amount of training data, particularly in smaller datasets.
Although a recent study has proposed an initialization method for PINNs, it depends on transfer learning~\citep{tarbiyati2023weight}.
To address these limitations, we propose a weight initialization method that is robust across different network sizes, achieves high data efficiency, and reduces reliance on both transfer learning and normalization techniques.

\begin{figure}[t!]
\centering 
\begin{tabular}{cc}
\begin{subfigure}[b]{0.45\textwidth}
    \centering
    \includegraphics[width=\textwidth]{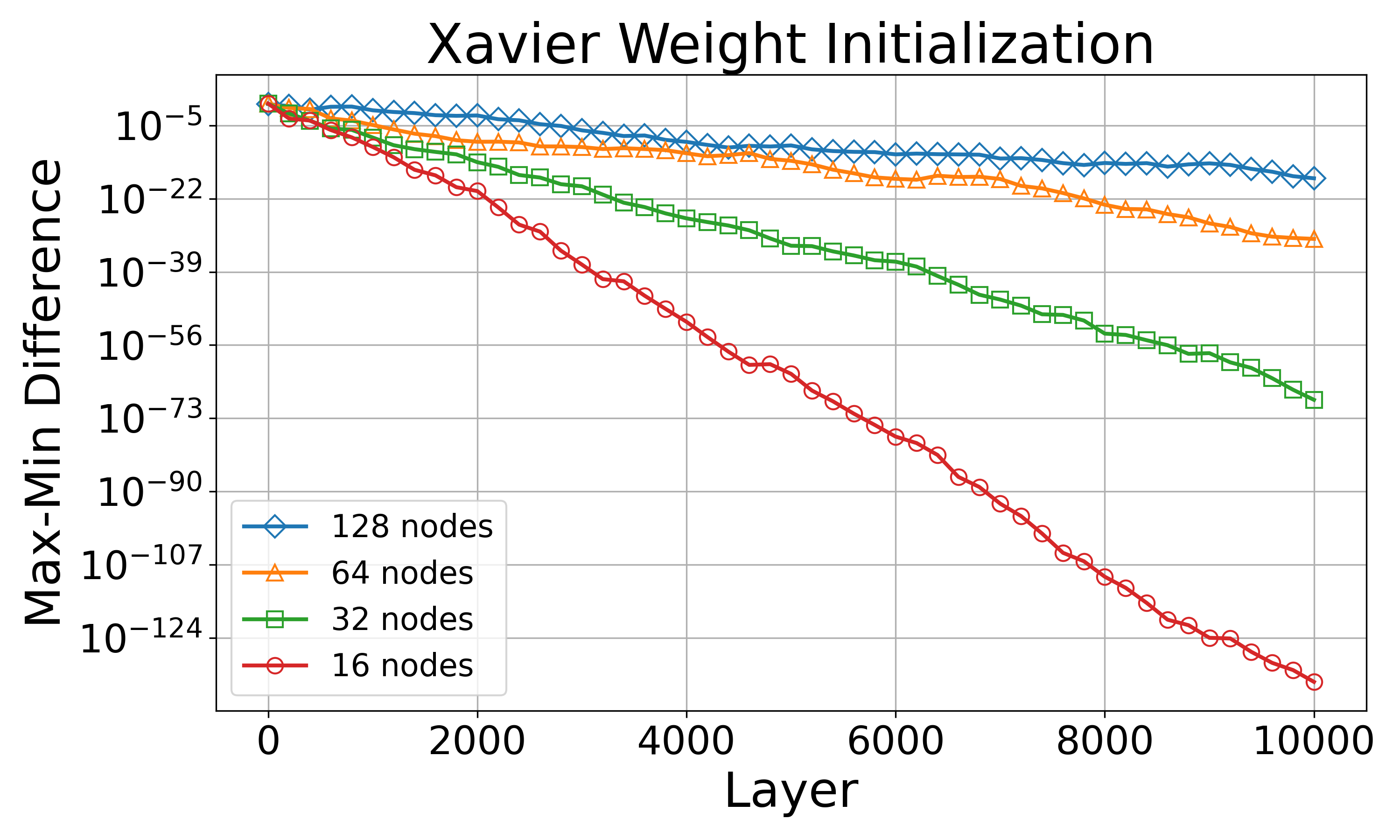}
    \caption{Xavier Initialization}
\end{subfigure} &
\begin{subfigure}[b]{0.45\textwidth}
    \centering
    \includegraphics[width=\textwidth]{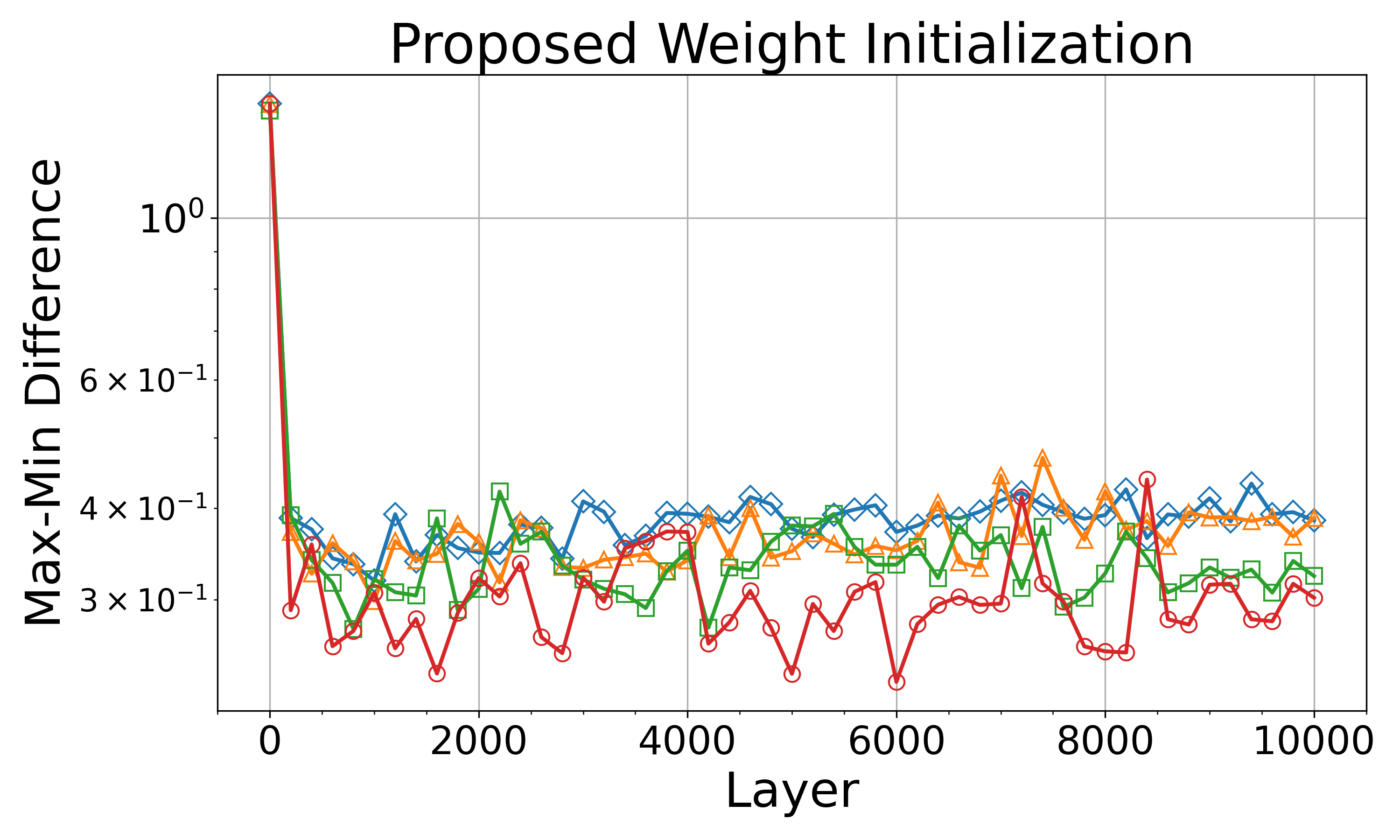}
    \caption{Proposed Initialization}
\end{subfigure} 
\end{tabular}
\caption{The difference between the maximum and minimum activation values at each layer when propagating $3,000$ input samples through a $10,000$-layer tanh FFNN, using Xavier initialization~\textbf{(left)} and the proposed initialization~\textbf{(right)}. Experiments were conducted on distinct networks with $10,000$ hidden layers, each having the same number of nodes: $16$, $32$, $64$, or $128$.} 
\label{fig:minmax}
\end{figure}

\section{Proposed Weight Initialization method}
\label{sec:gen_inst}
In this section, we discuss the proposed weight initialization method. Section~\ref{3.1} introduces the theoretical motivation behind the method. Section~\ref{3.2} presents how to derive the initial weight matrix that satisfies the conditions outlined in Section~\ref{3.1}. Finally, in Section~\ref{3.3}, we suggest the optimal hyperparameter $\sigma_z$ in the proposed method. 

\subsection{Theoretical motivation}\label{3.1}
Experimental results in Figure~\ref{fig:minmax} indicate that when Xavier initialization is employed in FFNNs with tanh activation, 
the activation values tend to vanish toward zero in deeper layers. This vanishing of activation values can hinder the training process due to a discrepancy between the activation values and the desired output. 
However, it is not straightforward to theoretically establish the conditions for preventing this phenomenon.
In this section, we present a theoretical analysis based on a fixed point of $\tanh(ax)$ to mitigate this issue. Before giving the theoretical foundations, consider the basic results for a hyperbolic tangent activation function. 
Recall that if \(f: \mathbb{R} \to \mathbb{R}\) is a function, then an element \(x^* \in \mathbb{R}\) is called a \emph{fixed point} of \(f\) if \(f(x^*) = x^*\).

\begin{lemma}\label{lemma:fixedpoint1}
For a fixed $a>0$, define the function $\phi_a: \mathbb{R} \to \mathbb{R}$ given as
\[
\phi_a(x) := \tanh(a x).
\]
Then, there exists a fixed point $x^\ast$ of $\phi_a$. Furthermore, 
\begin{itemize}
    \item[$(1)$] if $0 < a \leq 1$, then $\phi_a$ has a unique fixed point $x^{\ast} = 0$.
    \item[$(2)$] if $a > 1$, then $\phi_a$ has three distinct fixed points: $x^{\ast}=-\xi_a$, $0$, $\xi_a$ such that $
    \xi_a>0$.
\end{itemize}
\end{lemma}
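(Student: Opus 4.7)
The plan is to recast the fixed-point condition $\tanh(ax)=x$ as the root-finding problem $g(x):=\tanh(ax)-x=0$, and to analyze $g$ via its derivative $g'(x)=a\,\sech^2(ax)-1$. Two structural facts drive the whole argument: $g$ is odd (so nonzero roots come in $\pm$ pairs, and $g(0)=0$ is automatic, so $x^\ast=0$ is a fixed point for every $a>0$), and $\sech^2$ is even and strictly decreasing on $[0,\infty)$ with maximum value $1$ at the origin. In particular, $g'(0)=a-1$ cleanly separates the two regimes of the lemma.

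For part~(1), when $0<a\le 1$, the bound $\sech^2(ax)\le 1$ yields $g'(x)\le a-1\le 0$ everywhere. If $a<1$, this inequality is strict and $g$ is strictly decreasing on $\mathbb{R}$, so $x=0$ is its only root. In the boundary case $a=1$ we have $g'(x)=\sech^2(x)-1\le 0$ with equality only at $x=0$, so $g$ is still strictly decreasing and the root $x=0$ is unique.

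For part~(2), when $a>1$, existence of a positive fixed point follows from the intermediate value theorem: since $g'(0)=a-1>0$, we have $g(x)>0$ for small $x>0$; and since $\tanh(ax)\to 1$ as $x\to\infty$, we have $g(x)\to-\infty$. So $g$ has at least one positive root $\xi_a$, and by oddness $-\xi_a$ is also a root. The delicate part is \emph{uniqueness} of $\xi_a$. Here I plan to exploit the observation that $g'(x)=a\,\sech^2(ax)-1$ is itself strictly decreasing on $[0,\infty)$, with $g'(0)=a-1>0$ and $g'(x)\to-1$; thus $g'$ has a unique zero $x_0>0$, so $g$ is increasing on $[0,x_0]$ and decreasing on $[x_0,\infty)$. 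Combined with $g(0)=0$ and $g(x)\to-\infty$, this unimodal shape forces exactly one positive root (necessarily in $(x_0,\infty)$). Applying oddness one more time yields precisely the three claimed fixed points $-\xi_a,\,0,\,\xi_a$.

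The main obstacle is the uniqueness step in case~(2); existence and the $-\xi_a$ symmetry are essentially immediate once $g$ is introduced. Everything hinges on the simple but essential monotonicity of $g'$ on the positive half-line, which turns what looks like a transcendental equation into a clean one-peak sign analysis.
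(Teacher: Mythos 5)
Your proposal is correct and follows essentially the same route as the paper: recast fixed points as roots of $g(x)=\tanh(ax)-x$, use the intermediate value theorem for existence, a sign analysis of $g'(x)=a\sech^2(ax)-1$ (increasing-then-decreasing on $[0,\infty)$ when $a>1$) for uniqueness of the positive root, and oddness for the symmetric root. Your treatment is in fact slightly tidier at the boundary case $a=1$ and avoids the paper's explicit computation of the critical point, but the argument is the same in substance.
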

\begin{proof}
The proof is detailed in Appendix~\ref{lemma1_proof}.
\end{proof}
Note that \( \tanh(x) < x \) for all \( x > 0 \). However, according to Lemma~\ref{lemma:fixedpoint1}, for $ a > 1 $ the behavior of \( \tanh(ax) \) changes. If \( x > \xi_a \) (resp. \( x < \xi_a \)), then \( \tanh(ax) < x \) (resp. \( \tanh(ax) > x \)).
At \( x = \xi_a \), the equality \( \tanh(ax) = x \) holds. The following lemma addresses the convergence properties of iteratively applying \( \tanh(ax) \) for any \( x > 0 \).

\begin{lemma}\label{lemma:fixedpoint2}
    For a given initial value $x_0>0$, define 
$$
x_{n+1} = \phi_a(x_n), \quad n = 0, 1, 2, \ldots.
$$
Then $\{x_n\}_{n=1}^{\infty}$ converges regardless of the positive initial value $x_0>0$.
Moreover,
\begin{itemize}
    \item[$(1)$] if $0 < a \leq 1$, then $x_n\to 0$ as $n\to \infty$.
    \item[$(2)$] if $a > 1$, then $x_n\to \xi_a$ as $n\to \infty$.
\end{itemize}
\end{lemma}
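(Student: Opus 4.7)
The plan is to combine strict monotonicity of $\phi_a$ with boundedness of the iterates and a pointwise comparison between $\phi_a(x)$ and $x$, then invoke the monotone convergence theorem in each regime. Since $\phi_a'(x) = a\,\sech^2(ax) > 0$, the map $\phi_a$ is strictly increasing on $\mathbb{R}$, and because $\phi_a$ sends $(0,\infty)$ into $(0,1)$, any iteration starting from $x_0 > 0$ remains strictly positive and bounded above by $\max(x_0, 1)$. With these two facts in hand, the only thing left to decide for each case is the direction of monotonicity of $\{x_n\}$, which is governed by the sign of $g(x) := \phi_a(x) - x$.

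For case (1), when $0 < a \leq 1$, I would invoke the elementary inequality $\tanh(y) < y$ for $y > 0$ to obtain $\phi_a(x) = \tanh(ax) < ax \leq x$ for every $x > 0$. Hence $\{x_n\}$ is strictly decreasing and bounded below by $0$, so it converges to some $L \geq 0$; continuity of $\phi_a$ forces $L$ to be a fixed point, and by Lemma~\ref{lemma:fixedpoint1}(1) the only candidate is $L = 0$.

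For case (2), when $a > 1$, the main step is to pin down the sign of $g$ on $(0,\infty)$. By Lemma~\ref{lemma:fixedpoint1}(2), $g$ vanishes on $[0,\infty)$ exactly at $0$ and $\xi_a$; since $g'(0) = a - 1 > 0$, the function $g$ is positive on a right neighborhood of $0$, and because $\phi_a(x) \to 1$ as $x \to \infty$ while $x \to \infty$, $g$ is eventually negative. The intermediate value theorem then yields $\phi_a(x) > x$ on $(0,\xi_a)$ and $\phi_a(x) < x$ on $(\xi_a,\infty)$. I would then split by the position of $x_0$ relative to $\xi_a$: if $0 < x_0 < \xi_a$, monotonicity of $\phi_a$ traps the iterates in $(0,\xi_a)$ and the above-the-diagonal inequality makes them strictly increasing, so monotone convergence produces a fixed-point limit in $(0,\xi_a]$ which must be $\xi_a$; if $x_0 = \xi_a$ the sequence is stationary; if $x_0 > \xi_a$, monotonicity gives $x_n > \xi_a$ and the below-the-diagonal inequality makes the iterates strictly decreasing, again forcing the limit $\xi_a$.

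The main obstacle is the sign analysis of $g$ on $(\xi_a,\infty)$ in case (2): one must rule out further crossings of the diagonal beyond $\xi_a$, and this is exactly where the uniqueness of the positive fixed point supplied by Lemma~\ref{lemma:fixedpoint1}(2), together with the asymptotic behaviour $\phi_a(x) \to 1 < x$ as $x \to \infty$, becomes essential. Once the sign structure of $g$ is established, both subcases of case (2) reduce to routine applications of the monotone convergence theorem, and the entire proof goes through.
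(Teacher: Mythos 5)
Your proposal is correct and follows essentially the same route as the paper's proof: a monotonicity-plus-boundedness argument via the monotone convergence theorem, with the direction of monotonicity determined by the sign of $\phi_a(x)-x$ (vanishing iterates when $0<a\le 1$, and convergence to $\xi_a$ from either side when $a>1$, using the uniqueness of the positive fixed point from Lemma~\ref{lemma:fixedpoint1}). Your write-up is in fact slightly more explicit than the paper's (e.g.\ using $\tanh y<y$ directly and spelling out the sign analysis of $g$ on $(\xi_a,\infty)$), but the underlying argument is the same.
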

\begin{proof}
The proof is detailed in Appendix~\ref{lemma2_proof}.
\end{proof}
Note that for \( a > 1 \) and \( x_0 > \xi_a > 0 \), the sequence \( \{x_n\} \) satisfies \( \tanh(ax_n) > \tanh(ax_{n+1}) > \xi_a \) for all \( n \in \mathbb{N} \). Similarly, when \( 0 < x_0 < \xi_a \), the sequence satisfies \( \tanh(ax_n) < \tanh(ax_{n+1}) < \xi_a \) for all \( n \in \mathbb{N} \). Given \( a > 1 \) and \( x_0 < 0 \), the sequence converges to \( -\xi_a \) as \( n \to \infty \) due to the odd symmetry of \( \tanh(ax) \). According to Lemma~\ref{lemma:fixedpoint2}, when \( a > 1 \), the sequence \( \{x_n\} \) converges to \( \xi_a \) or \( -\xi_a \), respectively, as \( n \to \infty \), for an arbitrary initial value \( x_0 > 0 \) or \( x_0 < 0 \).

Note that the parameter $a$ in Lemma~\ref{lemma:fixedpoint2} does not change across all iterations. Propositions~\ref{prop:fixedpoint3} and Corollary~\ref{cor:fixedpoint4}, consider the case where the value of $a$ varies for each iteration.
\begin{proposition}\label{prop:fixedpoint3}
Let $\{a_n\}_{n=1}^\infty$ be a positive real sequence, i.e., $a_n>0$ for all $n\in \mathbb{N}$, such that only finitely many elements are greater than $1$. 
Suppose that $\{\Phi_m\}_{m=1}^\infty$ is a sequence of functions defined as for each $m\in \mathbb{N}$
$$ \Phi_m = \phi_{a_m} \circ \phi_{a_{m-1}} \circ \cdots \circ \phi_{a_1}.$$
Then for any $x\in \mathbb{R}$ 
$$
\lim_{m \to \infty} \Phi_m(x) = 0.
$$
\end{proposition}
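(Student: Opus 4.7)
The plan is to reduce the infinite composition to the setting of Lemma~\ref{lemma:fixedpoint2}(1) by discarding a finite prefix. First I would use the hypothesis to pick $N \in \mathbb{N}$ such that $a_n \leq 1$ for every $n > N$, and set $y_0 := \Phi_N(x)$ (interpreting $\Phi_0$ as the identity if $N=0$). Writing $y_k := \Phi_{N+k}(x) = \tanh(a_{N+k}\, y_{k-1})$ for $k \geq 1$, it suffices to show $y_k \to 0$, since the first $N$ compositions only shift the initial point to $y_0$.

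Next I would exploit oddness: since $\tanh$ is odd, so is every $\phi_a$ and hence every $\Phi_m$. Combined with the trivial observation that $y_0 = 0$ forces $y_k \equiv 0$, this reduces the problem to the case $y_0 > 0$; the case $y_0 < 0$ follows by replacing $x$ with $-x$.

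The core estimate is the elementary inequality $0 < \tanh(t) < t$ for $t > 0$. Applied with $t = a_{N+k}\, y_{k-1}$ and using $a_{N+k} \leq 1$, it yields
\[
0 < y_k < a_{N+k}\, y_{k-1} \leq y_{k-1},
\]
so $\{y_k\}$ is positive and strictly decreasing, hence converges to some $L \geq 0$. To upgrade this to $L = 0$, I would introduce the auxiliary iteration $z_0 := y_0$, $z_{k+1} := \tanh(z_k)$, which converges to $0$ by Lemma~\ref{lemma:fixedpoint2}(1). A one-line induction using monotonicity of $\tanh$ and $a_{N+k} \leq 1$ gives $y_k \leq z_k$ for all $k$, forcing $L = 0$.

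I do not foresee a serious obstacle: the only care needed is organizing the argument so that the finitely many indices with $a_n > 1$ are disposed of first, after which monotone convergence plus comparison with the $a = 1$ iteration finishes the job. A more direct alternative is to pass to the limit in $y_k = \tanh(a_{N+k}\, y_{k-1})$ along a convergent subsequence of $\{a_{N+k}\} \subset (0,1]$ and derive a contradiction from $L > 0$ via $\tanh^{-1}(L)/L > 1$, but the comparison argument is cleaner and avoids any subsequence extraction.
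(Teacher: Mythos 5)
Your proof is correct and takes essentially the same route as the paper: dispose of the finitely many indices with $a_n>1$, then dominate the remaining composition by the pure $\tanh$ iteration (the paper's upper-bound sequence built from $c_n=1$ is exactly your comparison sequence $z_k$) and conclude via Lemma~\ref{lemma:fixedpoint2}(1) and a squeeze. Your explicit handling of oddness and the monotone-decrease observation are minor refinements of the same argument, not a different approach.
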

\begin{proof}
The proof is detailed in Appendix~\ref{proposition_proof}.
\end{proof}

\begin{corollary}\label{cor:fixedpoint4}
Let $\epsilon>0$ be given. Suppose that 
$\{a_n\}_{n=1}^\infty$ be a positive real sequence such that only finitely many elements are lower than $1+\epsilon$. Then for any $x\in\mathbb{R}\setminus \{0\}$
$$
\lim_{m \to \infty} \left|\Phi_m(x)\right| \geq \xi_{1+\epsilon}.
$$
\end{corollary}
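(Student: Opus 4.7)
The plan is to mirror the squeeze used in the proof of Proposition~\ref{prop:fixedpoint3}, but now pinching $\Phi_m$ only from below. Set $N=\max\{n\,:\,a_n<1+\epsilon\}$, which is finite by hypothesis, so that $a_n\geq 1+\epsilon$ for every $n>N$. Because every $\phi_a$ is odd, it suffices to treat $x>0$; since $\phi_a$ maps $(0,\infty)$ into $(0,1)$ for any $a>0$, the intermediate value $y_N:=\Phi_N(x)$ is strictly positive.

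Next, I would introduce the comparison sequence $\{\tilde a_n\}$ defined by $\tilde a_n = a_n$ for $n\leq N$ and $\tilde a_n = 1+\epsilon$ for $n>N$, and set $\tilde\Phi_m = \phi_{\tilde a_m}\circ\cdots\circ\phi_{\tilde a_1}$. The two elementary monotonicities $\partial_t \phi_a(t)>0$ and $\partial_a \phi_a(t) = t\,\sech^2(at)>0$ (valid for $t>0$) fuel an induction on $m\geq N$: the base case is $\Phi_N(x)=\tilde\Phi_N(x)=y_N$, and the inductive step combines monotonicity in the base-point with $a_{m+1}\geq 1+\epsilon$ to bound $\phi_{a_{m+1}}(\Phi_m(x))$ below by $\phi_{1+\epsilon}(\tilde\Phi_m(x)) = \tilde\Phi_{m+1}(x)$. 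This yields $\Phi_m(x)\geq \tilde\Phi_m(x)>0$ for all $m\geq N$.

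Finally, Lemma~\ref{lemma:fixedpoint2} applied to the tail of $\tilde\Phi_m$ — which is nothing but iteration of the single map $\phi_{1+\epsilon}$ with scaling $1+\epsilon>1$ starting from the positive value $y_N$ — forces $\tilde\Phi_m(x)\to \xi_{1+\epsilon}$. Hence $\liminf_{m\to\infty}\Phi_m(x)\geq \xi_{1+\epsilon}$; running the same argument at $-x$ and using the odd symmetry $\Phi_m(-x)=-\Phi_m(x)$ upgrades this to $|\Phi_m(x)|\geq \xi_{1+\epsilon}$ in the limit for every nonzero $x$. The main subtlety I expect is verifying the monotonicity-in-$a$ step when $y_N$ already exceeds $\xi_{1+\epsilon}$: there $\phi_{1+\epsilon}$ contracts from above toward $\xi_{1+\epsilon}$, so the sandwich still delivers only the lower bound $\xi_{1+\epsilon}$, which is nonetheless precisely what the statement asserts.
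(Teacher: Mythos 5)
Your proposal is correct and follows essentially the same route as the paper: replace the tail of $\{a_n\}$ beyond $N=\max\{n : a_n<1+\epsilon\}$ by the constant $1+\epsilon$, compare $\Phi_m$ with the resulting composite via monotonicity of $\phi_a(t)$ in both $t$ and $a$, and invoke Lemma~\ref{lemma:fixedpoint2} to send the comparison iterates to $\xi_{1+\epsilon}$. You merely spell out the monotonicity induction and the odd-symmetry reduction that the paper leaves implicit (and your observation that the argument really gives a bound on the $\liminf$ is a fair refinement, since the paper's statement tacitly assumes the limit exists).
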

\begin{proof}
The proof is detailed in Appendix~\ref{cor_proof}.
\end{proof}

Based on Proposition~\ref{prop:fixedpoint3} and Corollary~\ref{cor:fixedpoint4}, if there exists a sufficiently large \( N\in \mathbb{N} \) such that \( a_n<1 \) (resp. \( a_n> 1+\epsilon \)) for all \( n \geq N \), then for any \( x_0 \in \mathbb{R} \setminus \{0\} \), $\Phi_m(x_0)\to 0$ (resp. \( |\Phi_m(x_0)| \geq \xi_{1+\epsilon} \)) as \( m \to \infty \). 
This result implies that if the sequence \( \{\Phi_m\}_{m=1}^M \) is finite, \( a_n \) for \( N \leq n \leq M \), where \( N \) is an arbitrarily chosen index close to \( M \), significantly influence the values of \( \Phi_M(x_0) \).

\subsection{The derivation of the proposed weight initialization method}\label{3.2}
Based on the theoretical motivations discussed in the previous section, we propose a weight initialization method that satisfies the following conditions in the initial forward pass:
\begin{itemize}
    \item[(i)] It prevents activation values from vanishing towards zero in deep layers.
    \item[(ii)] It ensures that the distribution of activation values in deep layers is approximately normal.
\end{itemize}

\textbf{Notations.} Consider a feedforward neural network with $L$ layers. 
The network processes $K$ training samples, denoted as pairs $\{(\bm{x}_i, \bm{y}_i)\}_{i=1}^K$, where $\bm{x}_i\in\mathbb{R}^{N_x}$ is training input and $\bm{y}_i\in\mathbb{R}^{N_y}$ is its corresponding output. The iterative computation at each layer $\ell$ is defined as follows:
$$
\bm{x}^\ell 
=\tanh(\mathbf{W}^{\ell}\bm{x}^{{\ell}-1}+\mathbf{b}^{\ell}) \in \mathbb{R}^{N_{\ell}} \quad \text{for all } \ell=1,\ldots,L,
$$
where $\mathbf{W}^{\ell} \in \mathbb{R}^{N_{\ell}\times N_{\ell-1}}$ is the weight matrix, $\mathbf{b}^{\ell} \in \mathbb{R}^{N_{\ell}}$ is the bias, and $\tanh(\cdot)$ is an element-wise activation hyperbolic tangent function. Denote $\mathbf{W}^{\ell}=[w_{ij}^{\ell}]$.

\textbf{Signal Propagation Analysis.} We present a simplified analysis of signal propagation in FFNNs with the tanh activation function. 
For notational convenience, it is assumed that all hidden layers, as well as the input and output layers, have a dimension of $n$, i.e., $N_{\ell}=n$ for all $\ell$. Given an arbitrary input vector $\bm{x} = (x_1, \ldots, x_n)$,  the first layer activation $\bm{x}^1 = \tanh(\mathbf{W}^1 \bm{x})$ can be expressed component-wise as
\[
x^1_i = \tanh\big(w^1_{i1} x_1 + \cdots + w^1_{in} x_n \big) = \tanh\bigg(\Big( w^1_{ii} + \sum_{\substack{j=1 \\ j \neq i}}^n \frac{w^1_{ij} x_j}{x_i} \Big) x_i\bigg) \text{ for } i = 1, \dots, n.
\]
For the ($k+1$)-th layer this expression can be generalized as
\begin{equation}\label{eq1}
x_i^{k+1} = \tanh\left( a_i^{k+1} x_i^{k} \right), \,
\text{where}\quad a_i^{k+1} = w^{k+1}_{ii} + \sum_{\substack{j=1 \\ j \neq i}}^n \frac{w^{k+1}_{ij} x_j^{k}}{x_i^{k}} \quad \text{for } i = 1, \ldots, n.
\end{equation}
Equation~(\ref{eq1}) follows the form of $\tanh(ax)$, as discussed in Section~\ref{3.2}.
According to Lemma~\ref{lemma:fixedpoint2}, when $a > 1$, for an arbitrary initial value $x_0>0$ or $x_0 <0$, the sequence $\{x_k\}$ defined by $x_{k+1} = \tanh(a x_k)$ converges to $\xi_a$ or $-\xi_a$, respectively, as $k \to \infty$. This result indicates that the sequence converges to the fixed point $\xi_a$ regardless of the initial value $x_0$. From the perspective of signal propagation in tanh-based FFNNs, this ensures that the activation values do not vanish as the network depth increases. 
Furthermore, by Proposition~\ref{prop:fixedpoint3}, if \( a^k_i \leq 1 \) for all \( N \leq k \leq L \), where \( N \) is an arbitrarily chosen index sufficiently close to \( L \), the value of \( x_i^L \) approaches zero. Therefore, to satisfy condition $($i$)$, \( a_i^k \) remains close to 1, and the inequality \( a_i^k \leq 1 \) does not hold for all \( N \leq k \leq L \).

\textbf{Proposed Weight Initialization.} The proposed initial weight matrix is defined as $\mathbf{W}^{\ell} = \mathbf{D}^{\ell} + \mathbf{Z}^{\ell} \in \mathbb{R}^{N_{\ell} \times N_{\ell-1}}$, where $\mathbf{D}^{\ell}_{i,j} = 1$ if $i \equiv j \pmod{N_{\ell -1}}$, and 0 otherwise~(Examples of $\mathbf{D}^{\ell}$ are provided in Appendix~\ref{initial}). The noise matrix  $\mathbf{Z}^{\ell}$ is drawn from $\mathcal{N}(0,\sigma_z^2)$, 
where $\sigma_z$ is set to $\alpha/\sqrt{N^{\ell-1}}$ with $\alpha=0.085$. 
Then $a_i^{k+1}$ follows the distribution: 
\begin{equation}\label{eq2}
a_i^{k+1}\sim\mathcal{N} \Bigg( 1, \, \sigma_z^2 +\sigma_z^2\sum_{\substack{j=1 \\ j \neq i}}^n \bigg(\frac{x_j^{k}}{x_i^{k}} \bigg)^2 \Bigg).
\end{equation}
According to Equation~(\ref{eq2}), $a_i^{k+1}$ follows a Gaussian distribution with a mean of $1$. Additionally, if $x_i^k$ becomes small relative to other elements in $\bm{x}^k$, the variance of the distribution in (\ref{eq2}) increases. 
Consequently, the probability that the absolute value of $x_i^{k+1}$ exceeds that of $x_i^k$ becomes higher. 
Figure~\ref{fig:minmax}~(b) shows that activation values maintain consistent scales in deeper layers.

\begin{figure}[t!]
\centering 
\begin{tabular}{cc}
\begin{subfigure}[b]{0.45\textwidth}
    \centering
    \includegraphics[width=\textwidth]{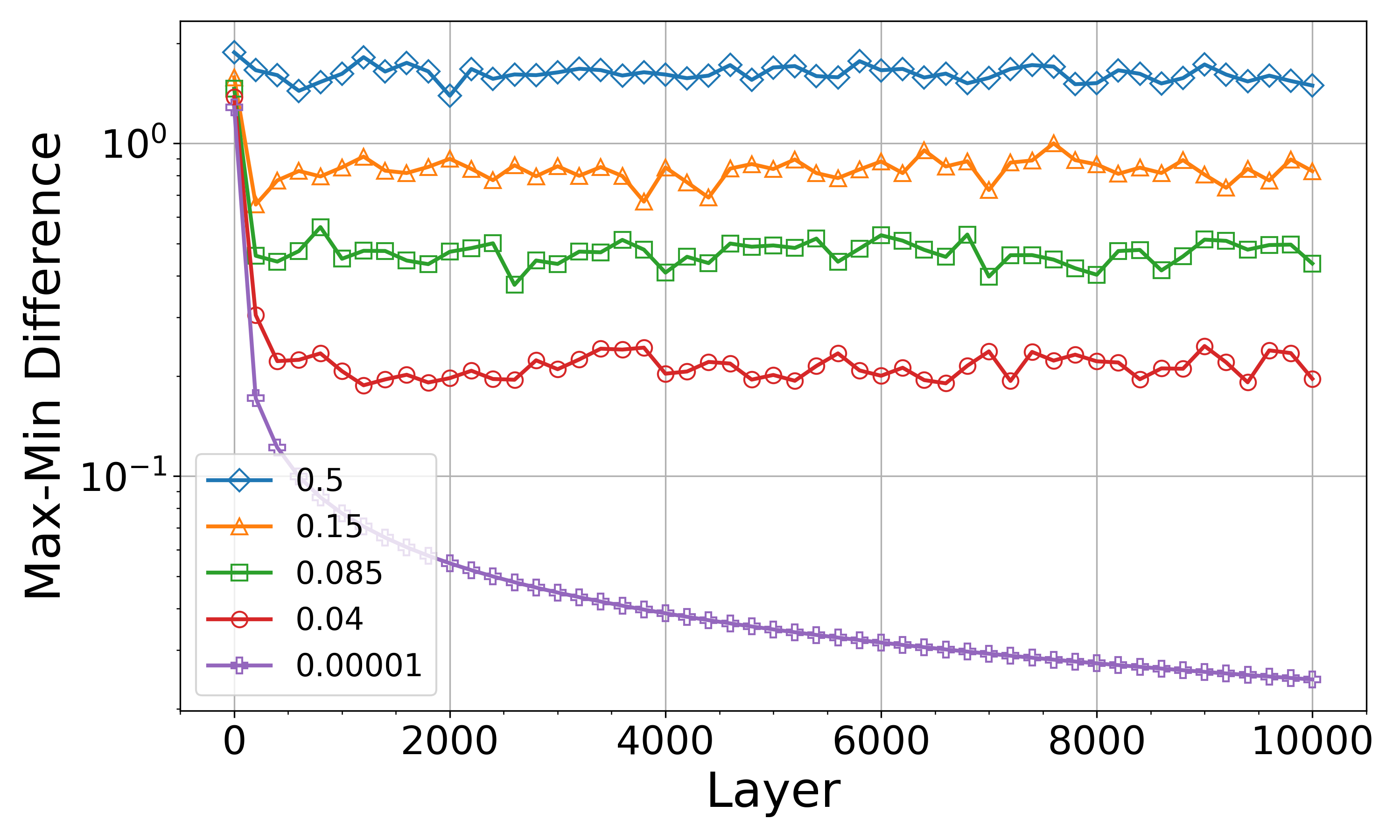}
    \caption{Same dimension}
\end{subfigure} &
\begin{subfigure}[b]{0.45\textwidth}
    \centering
    \includegraphics[width=\textwidth]{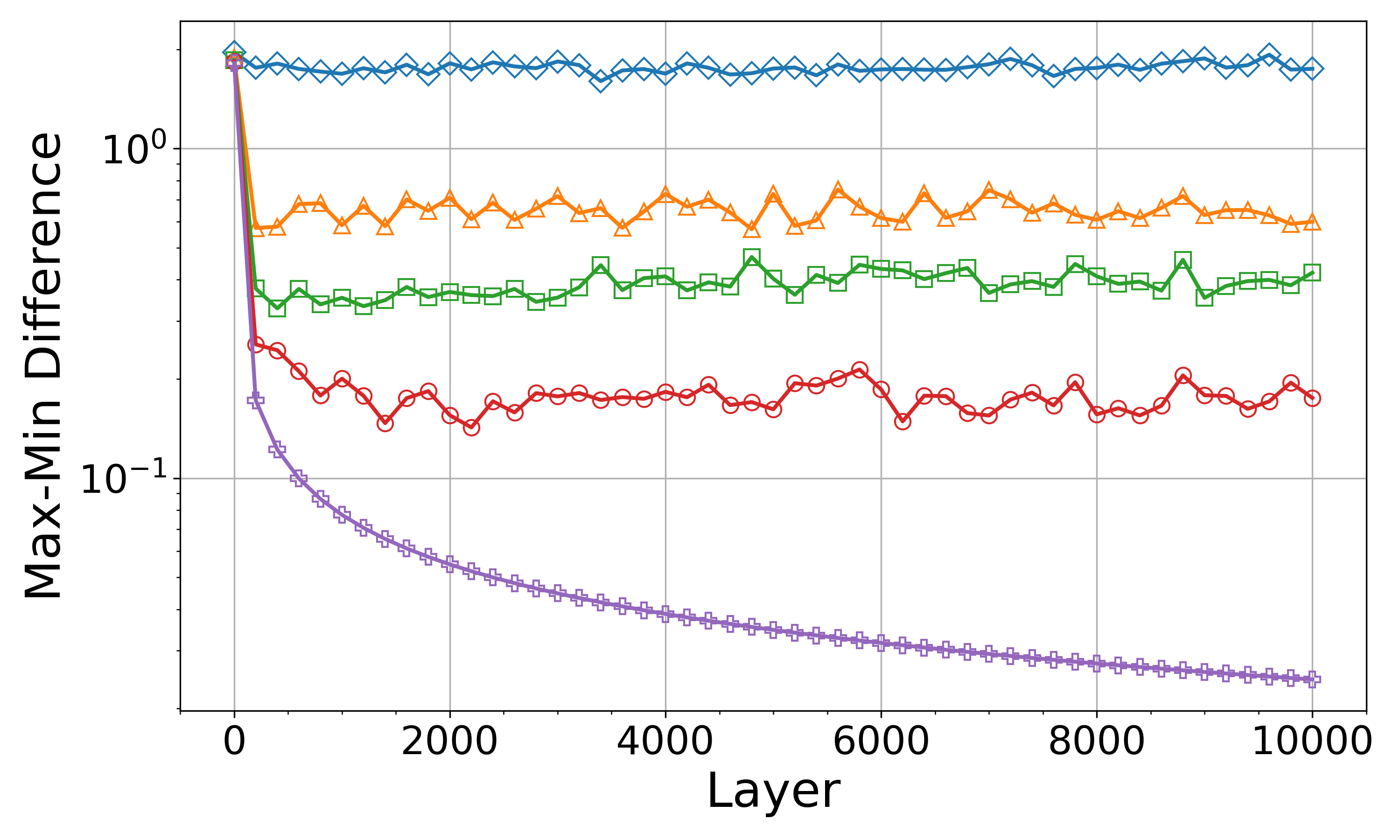}
    \caption{Varying dimensions}
\end{subfigure} 
\end{tabular}
\caption{The difference between the maximum and minimum activation values at each layer when propagating $3,000$ input samples through a $10,000$-layer tanh FFNN, using the proposed initialization with \(\alpha\) set to $0.04, 0.085, 0.15,$ and $0.5$. Network with $10,000$ hidden layers, each with 32 nodes~\textbf{(left)}, and a network with alternating hidden layers of $64$ and $32$ nodes~\textbf{(right)}.} 
\label{fig:minmax_std}
\end{figure}

\subsection{Preventing activation saturation via appropriate $\sigma_z$ tuning}\label{3.3}
In this section, we determine the appropriate value of \( \alpha \) in \( \sigma_z = \alpha / \sqrt{N_{\ell-1}} \) that satisfies condition~$($ii$)$. Condition~$($ii$)$ is motivated by normalization methods~\citep{ioffe2015batch, ba2016layer}. Firstly, we experimentally investigated the impact of $\sigma_z$ on the scale of the activation values. As shown in Figure~\ref{fig:minmax_std}, increasing \( \sigma_z = \alpha/\sqrt{N_{\ell-1}} \) broadens the activation range in each layer, while decreasing \( \sigma_z \) narrows it.

\textbf{When $\sigma_z$ is Large.} Setting $\sigma_z$ to a large value can lead to saturation. 
If \( \sigma_z \) is too large, Equation~(\ref{eq2}) implies that the likelihood of \( a_i^k \) deviating significantly from 1 increases.
This increases the likelihood of activation values being bounded by \( \xi_{1+\epsilon} \) in sufficiently deep layers, as stated in Corollary~\ref{cor:fixedpoint4}. Consequently, in deeper layers, activation values are less likely to approach zero and tend to saturate toward specific values. Please refer to the Figure~\ref{std_normal} for the cases where $\sigma_z = 0.3$ and $3$.

\begin{figure}[t!]
\centering
\includegraphics[width=0.95\textwidth]{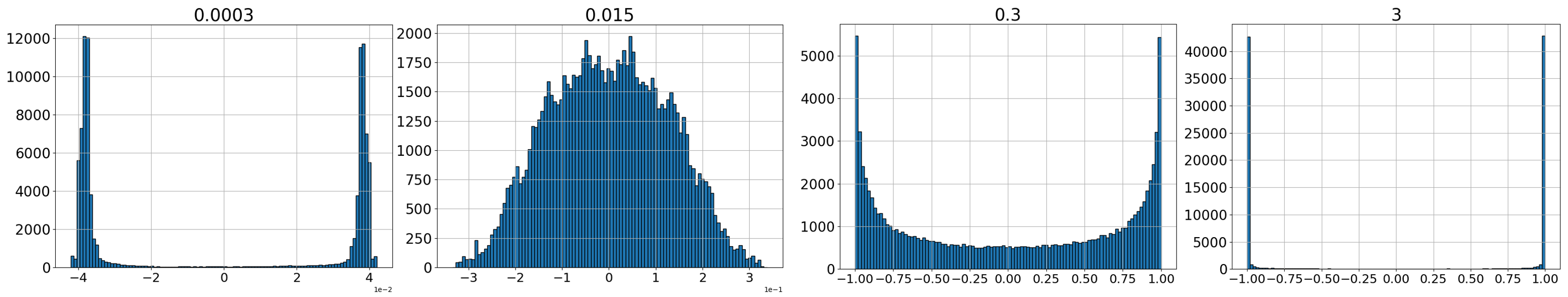}
\caption{The activation values in the 1000\textsuperscript{th} layer, with $32$ nodes per hidden layer, were analyzed using the proposed weight initialization method with $\sigma_z$ values of $0.0003$, $0.015$, $0.3$, and $3$. The analysis was conducted on $3,000$ input samples uniformly distributed within the range $[-1, 1]$.}
\label{std_normal}
\end{figure}

\textbf{When $\sigma_z$ is Small.} 
If \( \sigma_z \) is too small, Equation~(\ref{eq2}) implies that the distribution of \( a_i^k \) has a standard deviation close to zero. Consequently, \( x_i^{k+1} \) can be approximated as the result of applying \( \tanh(x) \) to \( x_i>0 \) repeatedly for a finite number of iterations, \( k \). Since \( \tanh'(x) \) decreases for \( x \geq 0 \), the values resulting from finite iterations eventually saturate.
Plese refer to the Figure~\ref{std_normal} when $\sigma_z = 0.0003$.

For these reasons, we experimentally determined an optimal \( \sigma_z \) that avoids being excessively large or small. As shown in Figure~\ref{std_normal}, \( \sigma_z = 0.015 \) maintains an approximately normal activation distribution without collapse. Additional experimental results are provided in Appendix~\ref{app1123}. Considering the number of hidden layer nodes, we set \( \sigma_z = \alpha / \sqrt{N^{\ell-1}} \) with \( \alpha = 0.085 \). Experimental results for solving the Burgers' equation using PINNs with varying \( \sigma_z \) are provided in Appendix~\ref{app1}.

\begin{table}[b!]
\centering
\caption{Validation accuracy and loss are presented for FFNNs with varying numbers of nodes~$(2, 8, 32, 128,512)$, each with $20$ hidden layers using tanh activation function. All models were trained for $20$ epochs, and the highest average accuracy and lowest average loss, computed over $10$ runs, are presented. The better-performing method is highlighted in bold when comparing different initialization methods under the same experimental settings.}
\label{table1}
\resizebox{14cm}{!}{%
\begin{tabular}{ll cc cc cc cc cc}
\toprule
\multirow{2}{*}{Dataset} & \multirow{2}{*}{Method} & \multicolumn{2}{c}{2 Nodes} & \multicolumn{2}{c}{8 Nodes} & \multicolumn{2}{c}{32 Nodes} & \multicolumn{2}{c}{128 Nodes} & \multicolumn{2}{c}{512 Nodes} \\
\cmidrule(lr){3-12}
 & & Accuracy & Loss & Accuracy & Loss & Accuracy & Loss & Accuracy & Loss & Accuracy & Loss\\
\midrule
\multirow{2}{*}{MNIST} & Xavier    & 49.78 & 1.632 & 68 & 0.958 & 91.67 & 0.277 & 95.45 & 0.154 &\underline{97.35}&0.087\\
                       & Proposed  & \textbf{62.82} & \textbf{1.185} & \textbf{77.95} & \textbf{0.706} & \textbf{92.51} & \textbf{0.255} & \textbf{96.12} & \textbf{0.134} &\underline{\textbf{97.96}}&\textbf{0.067} \\
\midrule
\multirow{2}{*}{FMNIST} & Xavier   & 42.89 & 1.559 & 68.55 & 0.890 & 81.03 & 0.533 & 86.20 & 0.389 &\underline{88.28}&0.331 \\
                        & Proposed & \textbf{51.65} & \textbf{1.324} & \textbf{71.31} & \textbf{0.777} & \textbf{83.06} & \textbf{0.475} & \textbf{87.12} & \textbf{0.359} &\underline{\textbf{88.59}}&\textbf{0.323}\\
\midrule
\multirow{2}{*}{CIFAR-10} & Xavier  & 32.82 & 1.921 & 43.51 & 1.608 & 48.62 & 1.473 & 47.58 & 1.510 &\underline{51.71}&1.369\\
                         & Proposed & \textbf{38.16} & \textbf{1.780} & \textbf{47.04} & \textbf{1.505} & \textbf{48.80}& \textbf{1.463} & \textbf{48.51} & \textbf{1.471} &\underline{\textbf{52.21}}&\textbf{1.359}\\
\midrule
\multirow{2}{*}{CIFAR-100} & Xavier & 10.87 & 4.065 & 18.53 & 3.619 & 23.71 & 3.301 & \underline{23.83} & 3.324 &17.72&3.672\\
                          & Proposed & \textbf{15.22} & \textbf{3.818} & \textbf{23.07} & \textbf{3.350} & \underline{\textbf{24.93}} & \textbf{3.237} & \textbf{24.91} & \textbf{3.240} &\textbf{22.80}&\textbf{3.435}\\
\bottomrule
\end{tabular}}
\end{table}

\begin{table*}[t!]
\centering
\caption{Validation accuracy and loss are presented for FFNNs with varying numbers of layers~$(3,10, 50, 100)$, each with $64$ number of nodes using the tanh activation function. All models were trained for $40$ epochs, and the highest average accuracy and lowest average loss, computed over $10$ runs, are presented.}
\label{table2}
\resizebox{12cm}{!}{%
\begin{tabular}{ll cc cc cc cc}
\toprule
\multirow{2}{*}{Dataset} & \multirow{2}{*}{Method} & \multicolumn{2}{c}{3 Layers} & \multicolumn{2}{c}{10 Layers} & \multicolumn{2}{c}{50 Layers} & \multicolumn{2}{c}{100 Layers} \\
\cmidrule(lr){3-10}
 & & Accuracy & Loss & Accuracy & Loss & Accuracy & Loss & Accuracy & Loss \\
\midrule
\multirow{2}{*}{MNIST} & Xavier    & 95.98 & 0.130 & 96.55 & 0.112 & \underline{96.57} & 0.123 & 94.08 & 0.194 \\
                       & Proposed  & \textbf{96.32} & \textbf{0.123} & \underline{\textbf{97.04}} & \textbf{0.102} & \textbf{96.72} & \textbf{0.109} & \textbf{96.06} & \textbf{0.132} \\
\midrule
\multirow{2}{*}{FMNIST} & Xavier   & 85.91 & 0.401 & \underline{88.73} & 0.319 & 87.72 & 0.344 & 83.41 & 0.463 \\
                        & Proposed & \textbf{86.51} & \textbf{0.379} & \underline{\textbf{89.42}} & \textbf{0.305} & \textbf{88.51} & \textbf{0.324} & \textbf{86.01} & \textbf{0.382} \\
\midrule
\multirow{2}{*}{CIFAR-10} & Xavier  & 42.91 & 1.643 & \underline{48.39} & 1.468 & 47.87 & 1.474 & 46.71 & 1.503 \\
                         & Proposed & \textbf{45.05} & \textbf{1.588} & \textbf{48.41} & \textbf{1.458} & \textbf{48.71} & \textbf{1.461} & \underline{\textbf{48.96}} & \textbf{1.437} \\
\midrule
\multirow{2}{*}{CIFAR-100} & Xavier & 19.10 & 3.628 & 22.73 & 3.400 & \underline{24.27} & 3.283 & 20.32 & 3.515 \\
                          & Proposed & \textbf{19.30} & \textbf{3.609} & \textbf{23.83} & \textbf{3.309} & \underline{\textbf{25.07}} & \textbf{3.190} & \textbf{24.41} & \textbf{3.234} \\
\bottomrule
\end{tabular}}
\end{table*}

\section{Experiments}\label{section4}
In this section, we present a series of experiments to evaluate the proposed weight initialization method. In Section~\ref{4.1}, we evaluate the performance of tanh FFNNs on benchmark classification datasets. In Section~\ref{4.2}, we apply Physics-Informed Neural Networks to solve PDEs. Both experiments assess the proposed method's robustness to network size and data efficiency.

\subsection{Classification Task }\label{4.1}
\textbf{Experimental Setting.} To evaluate the effectiveness of the proposed weight initialization method, we conduct experiments on the MNIST, Fashion MNIST~(FMNIST), CIFAR-10, and CIFAR-100~\citep{krizhevsky2009learning} datasets with the Adam optimizer. All experiments are performed with a batch size of $64$ and a learning rate of $0.0001$.
Fifteen percent of the dataset is allocated for validation. The experiments are implemented in TensorFlow without skip connections and learning rate decay in any of the experiments. 

\textbf{Width Independence in Classification Task.}
We evaluate the proposed weight initialization method in training tanh FFNNs, focusing on its robustness across different network widths. Five tanh FFNNs are designed, each with 20 hidden layers, and with $2, 8, 32$, $128$, and $512$ nodes per hidden layer, respectively. In Table~\ref{table1}, for the MNIST, Fashion MNIST  and CIFAR-10 datasets, the network with 512 nodes achieves the highest accuracy and lowest loss when the proposed method is employed. However, for the CIFAR-100 dataset, the network with 32 nodes yields the highest accuracy and lowest loss when employing the proposed method. In summary, the proposed method demonstrates robustness across different network widths in tanh FFNNs. Detailed experimental results are provided in Appendix~\ref{app2}.

\textbf{Depth Independence in Classification Task.} The expressivity of neural networks is known to increase exponentially with depth, leading to improved generalization performance~\citep{poole2016exponential, raghu2017expressive}. To evaluate the robustness of the proposed weight initialization method across different network depths, we conduct experiments on deep FFNNs with tanh activation functions. Specifically, we construct four tanh FFNNs, each with $64$ nodes per hidden layer and $3$, $10$, $50$, and $100$ hidden layers. respectively. In Table~\ref{table2}, for both the MNIST and Fashion MNIST datasets, the network with 10 hidden layers achieves the highest accuracy and lowest loss when our proposed method is employed. Both initialization methods exhibit lower performance in networks with $3$ layers compared to those with more layers. Moreover, for more complex datasets such as CIFAR-10 and CIFAR-100, the proposed method demonstrated improved performance when training deeper networks.

\begin{figure}[h!]
\centering 
\begin{tabular}{ccc}
\begin{subfigure}[b]{0.24\textwidth}
    \centering
    \includegraphics[width=\textwidth]{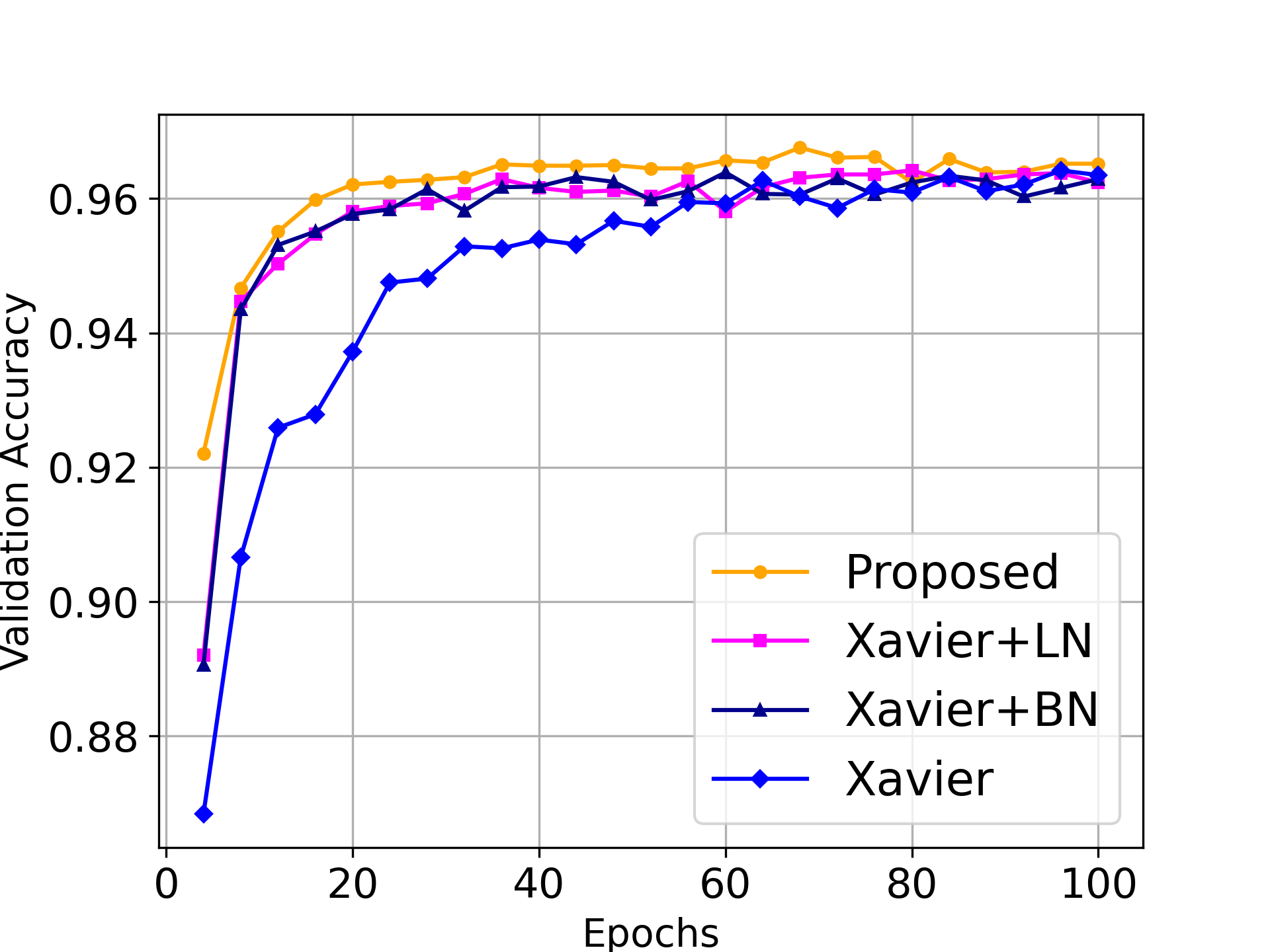}
    \caption{MNIST}
\end{subfigure} &
\begin{subfigure}[b]{0.24\textwidth}
    \centering
    \includegraphics[width=\textwidth]{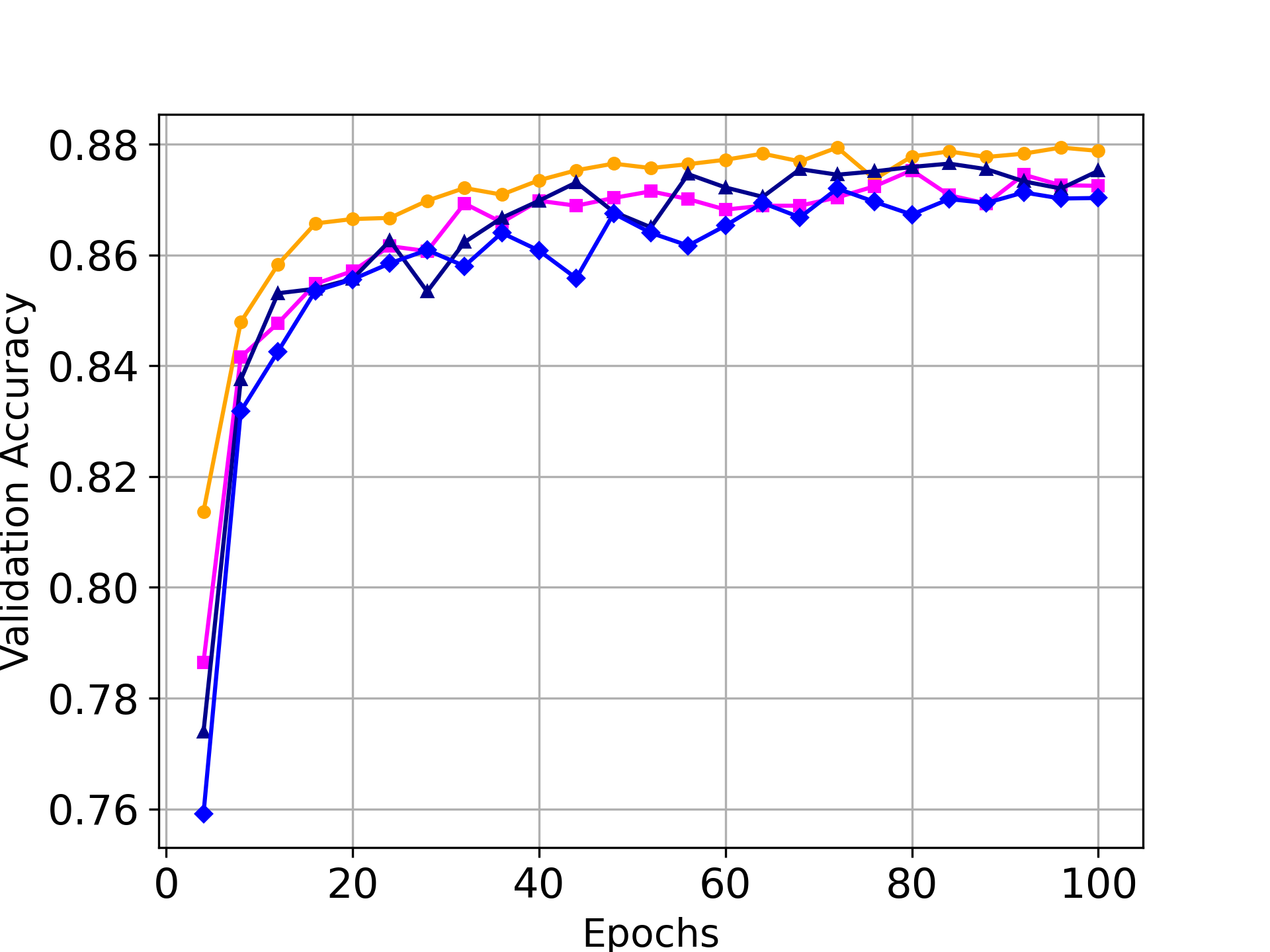}
    \caption{Fashion MNIST}
\end{subfigure} 
\begin{subfigure}[b]{0.24\textwidth}
    \centering
    \includegraphics[width=\textwidth]{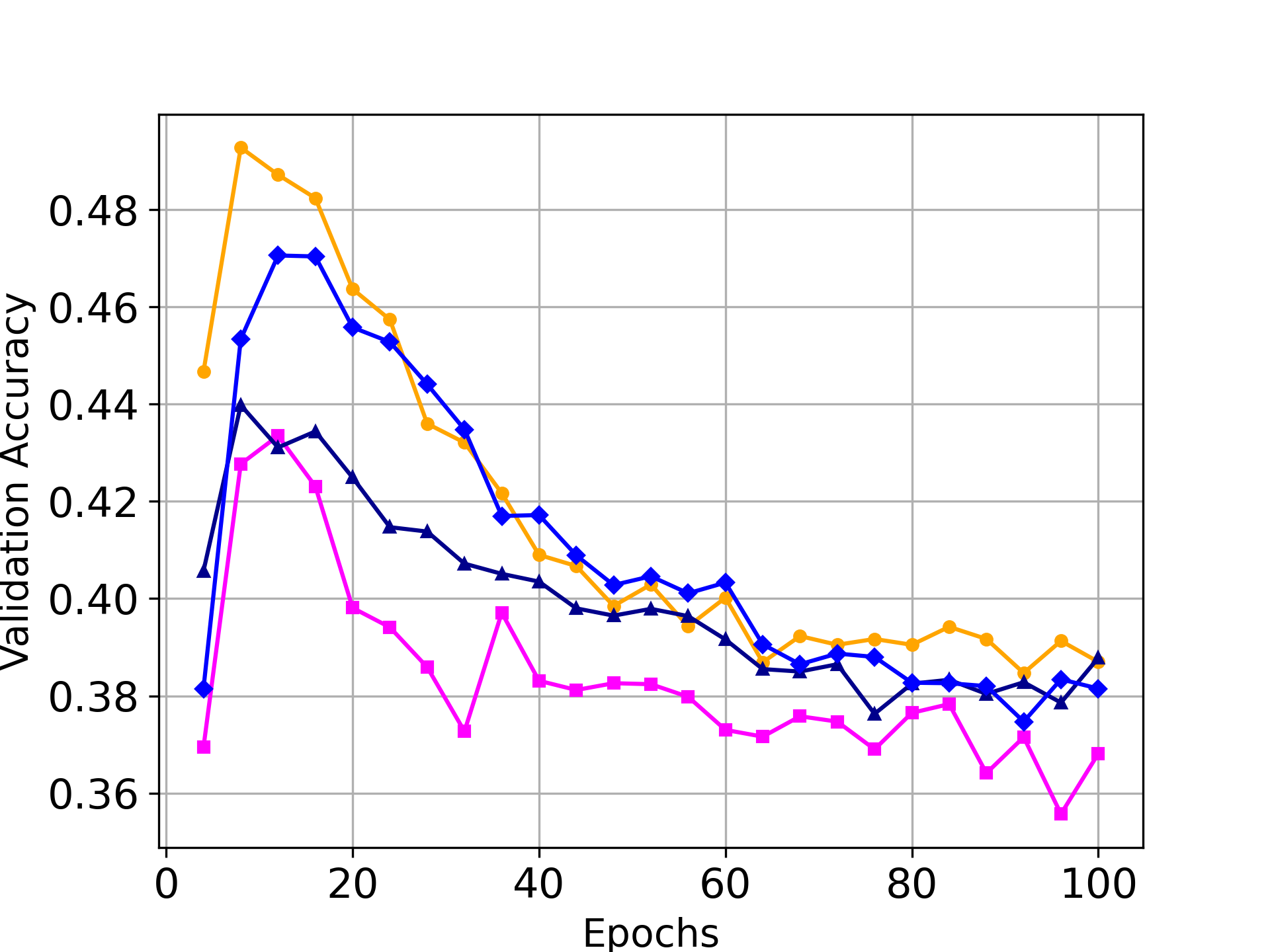}
    \caption{CIFAR10}
\end{subfigure}
\begin{subfigure}[b]{0.24\textwidth}
    \centering
    \includegraphics[width=\textwidth]{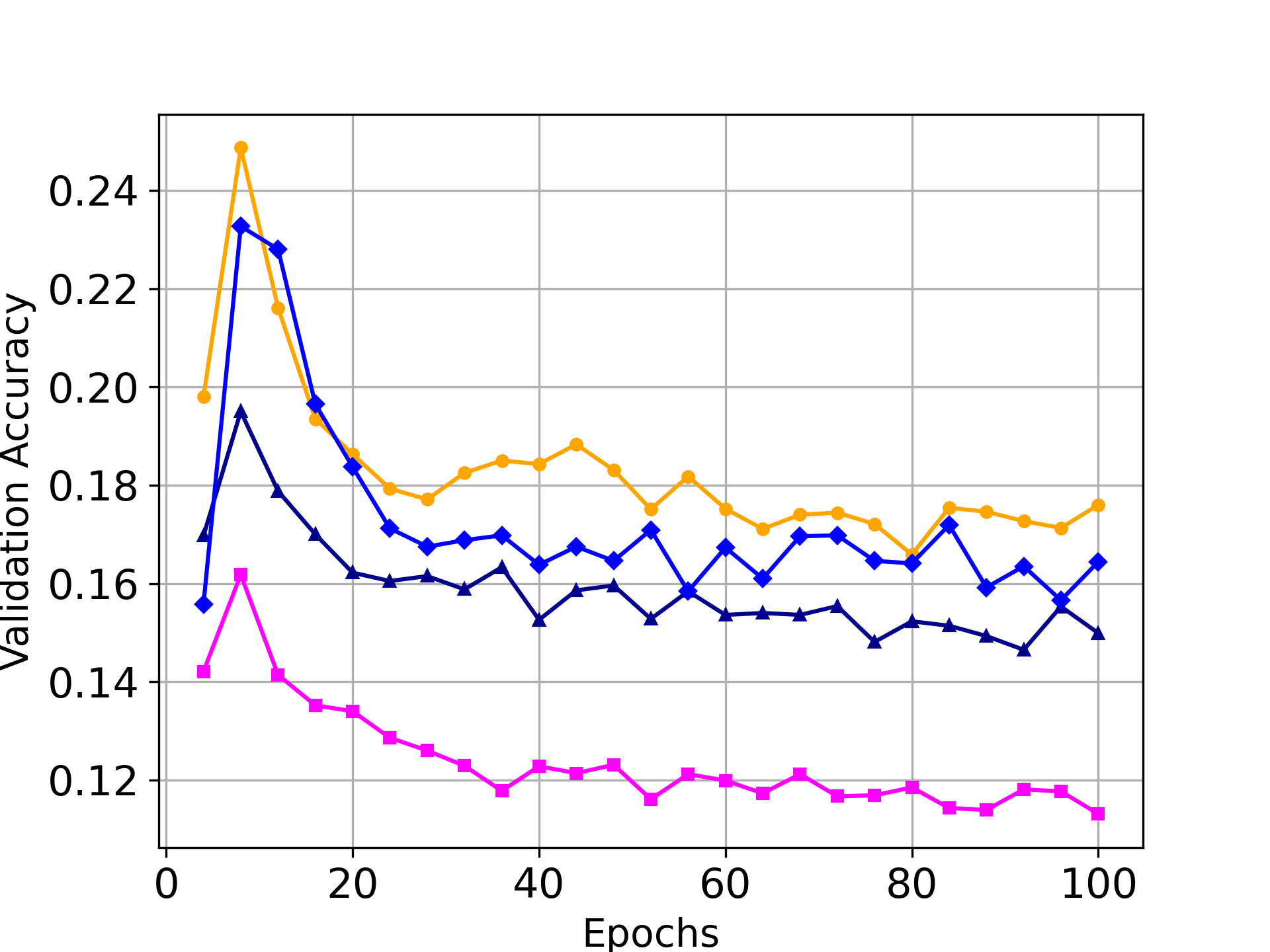}
    \caption{CIFAR100}
\end{subfigure}

\end{tabular}
\caption{Validation accuracy for a tanh FFNN with 50 hidden layers (32 nodes each). Xavier + BN and Xavier + LN represent Xavier initialization with Batch Normalization or Layer Normalization applied every 5 layers, respectively.} 
\label{norm_100epoch}
\end{figure}

\textbf{Normalization Methods.} Xavier initialization is known to cause vanishing gradients and activation problems in deeper networks. These issues can be mitigated by applying Batch Normalization~(BN) or Layer Normalization~(LN) in the network. Therefore, we compare the proposed method with Xavier, Xavier with BN, and Xavier with LN. 
To evaluate the effectiveness of normalization, we conducted experiments using a sufficiently deep neural network with $50$ hidden layers. As shown in Figure~\ref{norm_100epoch}, for datasets with relatively fewer features, such as MNIST and FMNIST, Xavier with normalization converges faster than Xavier. However, for feature-rich datasets such as CIFAR-10 and CIFAR-100, the accuracy of Xavier with normalization is lower than that of Xavier. Normalization typically incurs a $30\%$ computational overhead, and additional hyperparameter tuning is required to determine which layers should apply normalization. In contrast, the proposed method achieves the best performance across all datasets without requiring normalization.

\textbf{Data Efficiency in Classification Task.} 
Based on the results in Table~\ref{table2}, we evaluated data efficiency on a network with $50$ hidden layers, each containing $64$ nodes, where Xavier exhibited strong performance. 
As shown in Table~\ref{table3}, the highest average accuracy and lowest average loss over $5$ runs after $100$ epochs are presented for datasets containing $10$, $20$, $30$, $50$, and $100$ samples. The proposed method achieved the best performance across all sample sizes.

\textbf{Non-uniform Hidden Layer Dimensions.} We evaluate the performance of the proposed initialization in networks where hidden layer dimensions are not uniform. As shown in Figure~\ref{fig:mnist_cifar}, the network consists of 60 hidden layers, where the number of nodes alternates between 32 and 16 in each layer. The proposed method demonstrates improved performance in terms of both loss and accuracy across all epochs on the MNIST and CIFAR-10 datasets. Additionally, Appendix~\ref{nonuniform} presents experiments on networks with larger variations in the number of nodes. Motivated by these results, Appendix~\ref{nonuniform} further explores autoencoders with significant differences in hidden layer dimensions.

\begin{figure}[h]
\centering 
\begin{tabular}{cccc}
\begin{subfigure}[b]{0.23\textwidth}
    \centering
    \includegraphics[width=\textwidth]{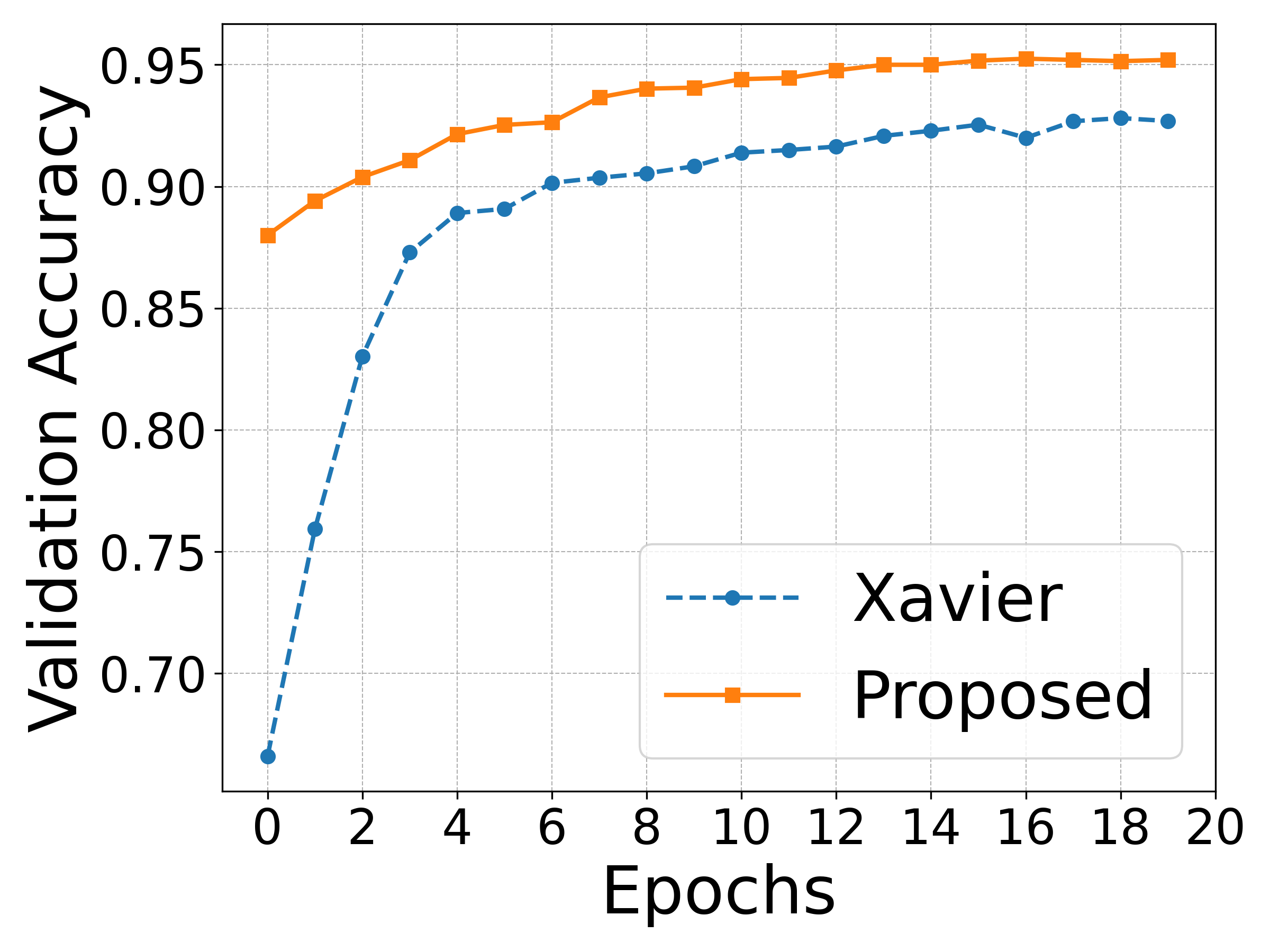}
    \caption{MNIST Accuracy}
\end{subfigure} &
\begin{subfigure}[b]{0.23\textwidth}
    \centering
    \includegraphics[width=\textwidth]{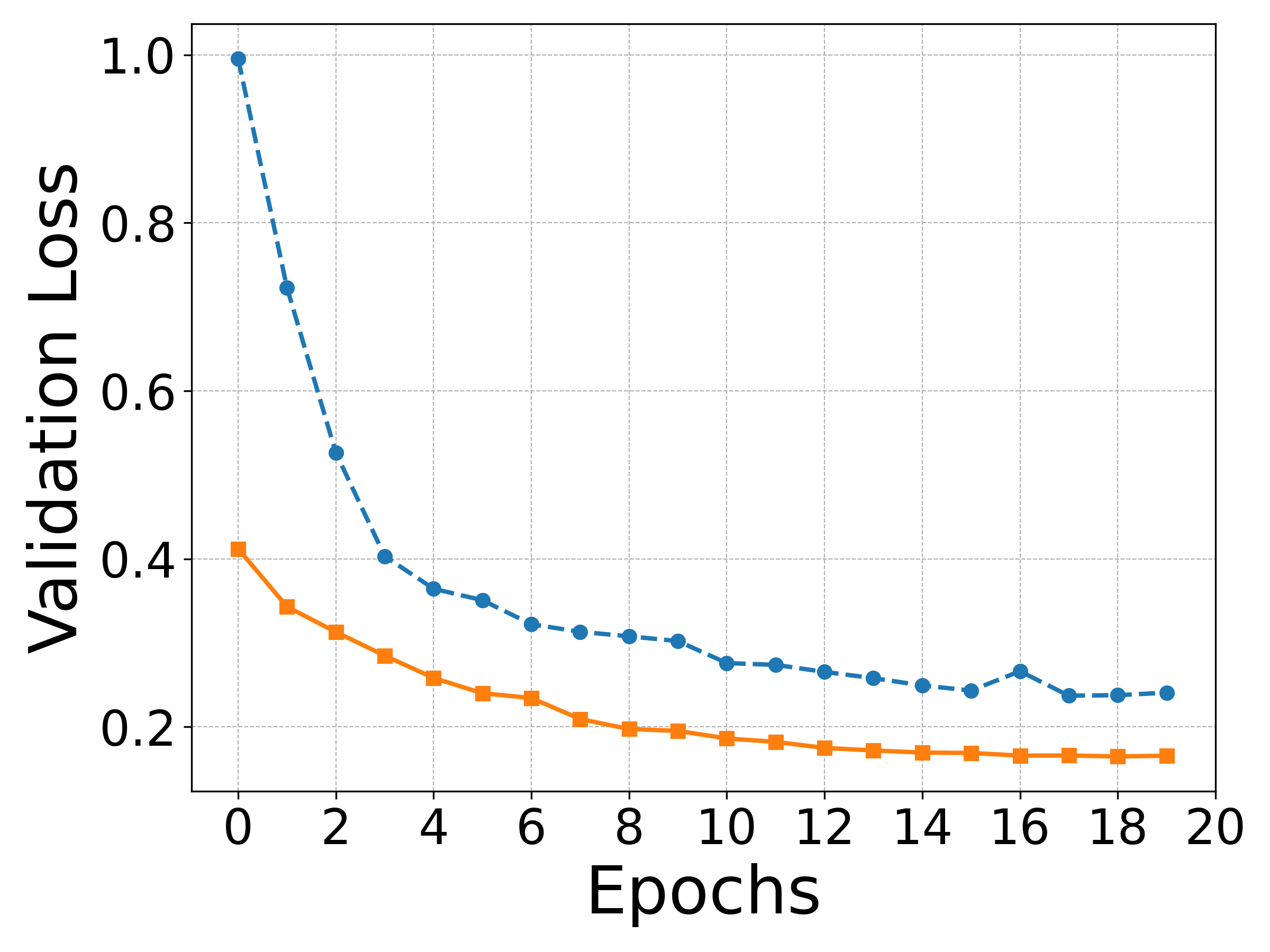}
    \caption{MNIST Loss}
\end{subfigure} 
\begin{subfigure}[b]{0.23\textwidth}
    \centering
    \includegraphics[width=\textwidth]{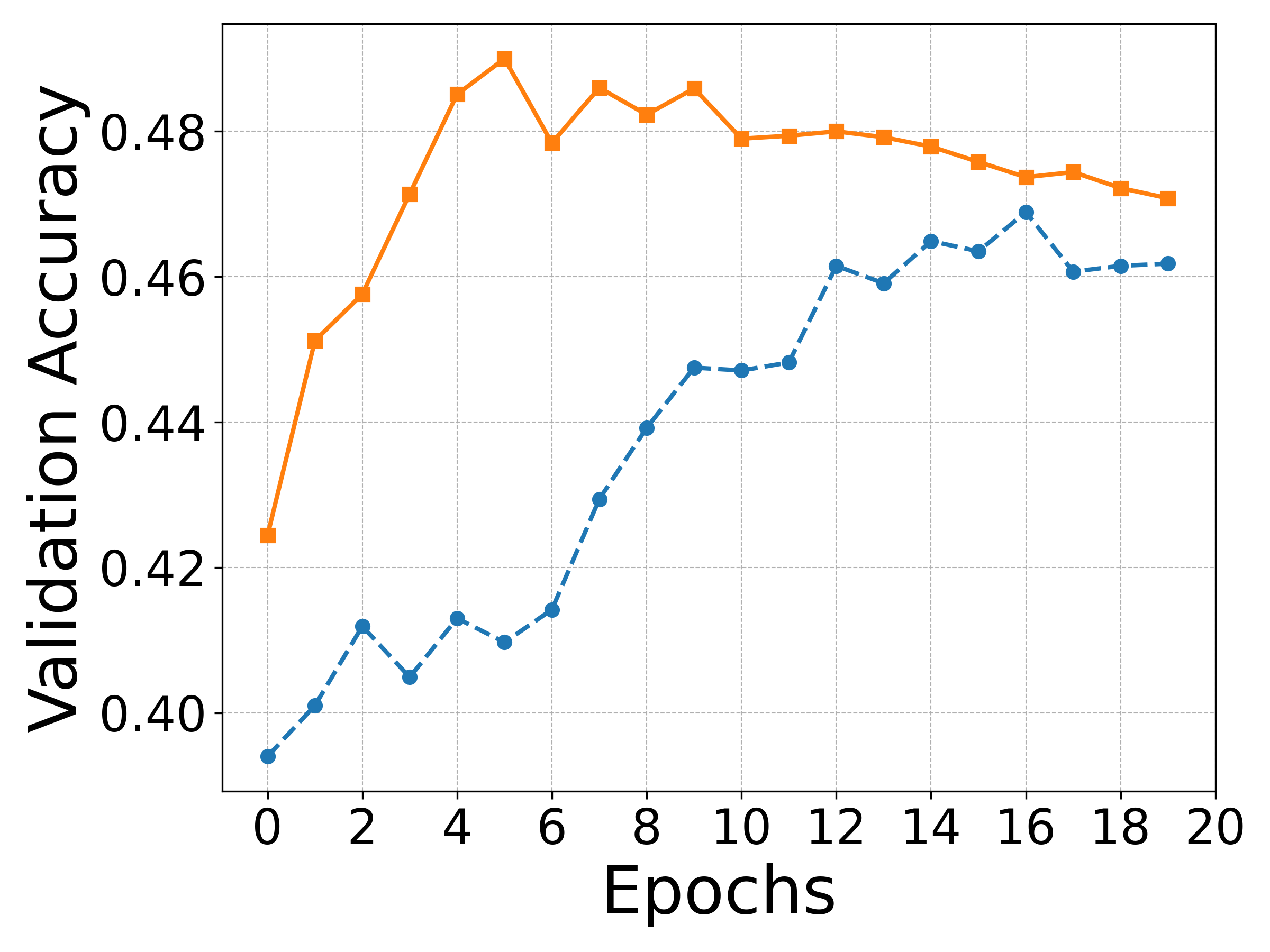}
    \caption{CIFAR10 Accuracy}
\end{subfigure} &
\begin{subfigure}[b]{0.23\textwidth}
    \centering
    \includegraphics[width=\textwidth]{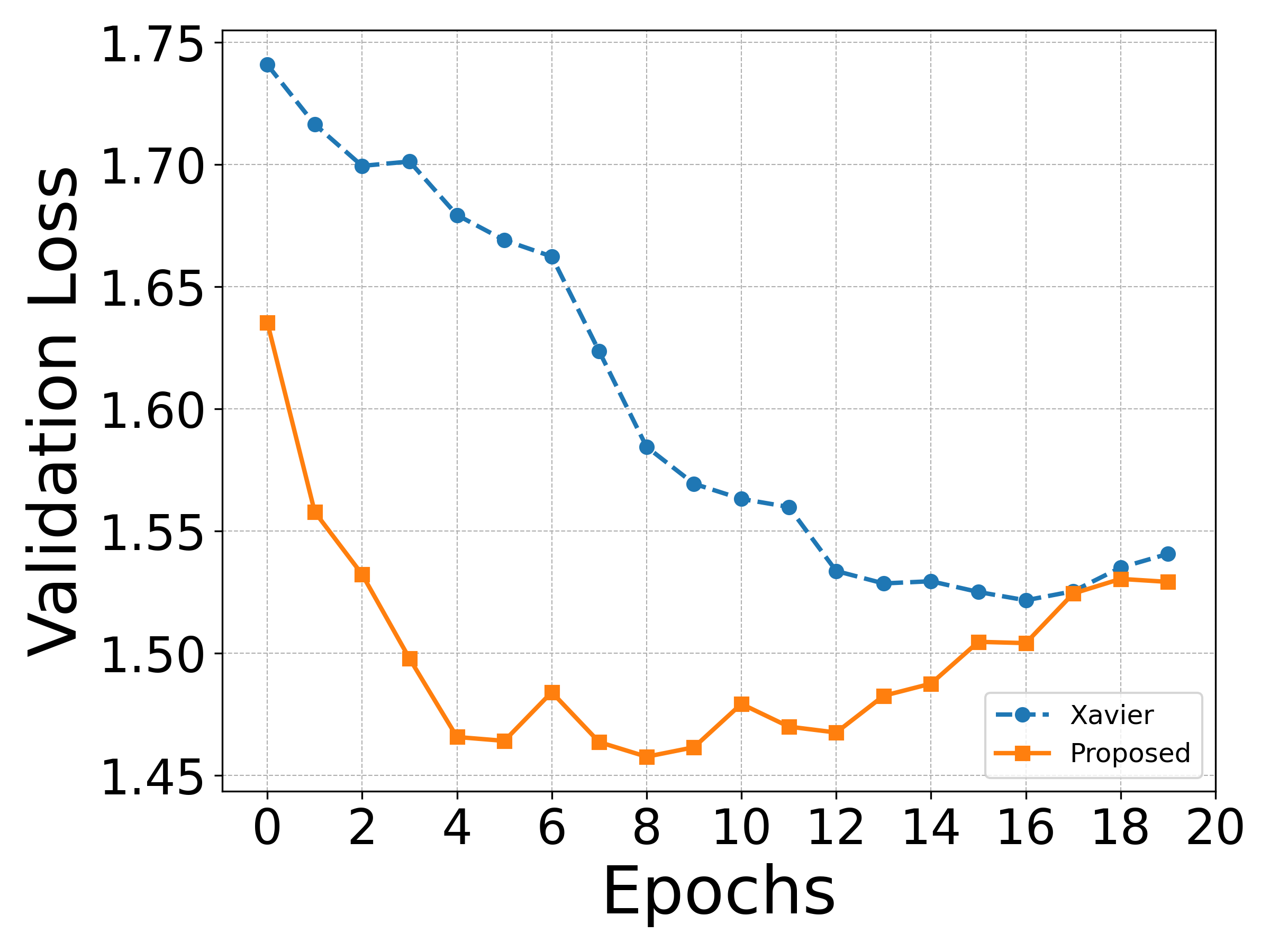}
    \caption{CIFAR10 Loss}
\end{subfigure} 
\end{tabular}
\caption{Validation accuracy and loss for a tanh FFNN with 60 hidden layers, where the number of nodes alternates between 32 and 16 across layers, repeated 30 times. The model was trained for 20 epochs on the MNIST and CIFAR-10 datasets.} 
\label{fig:mnist_cifar}
\end{figure}

\begin{table*}[t]
\centering
\caption{Validation accuracy and loss for a 10-layer FFNN~(64 nodes per layer) trained on datasets containing 10, 20, 30, 50, and 100 samples. Results show the highest average accuracy and lowest average loss over 5 runs after 100 epochs.}  
\label{table3}
\resizebox{13cm}{!}{%
\begin{tabular}{ll cc cc cc cc cc}
\toprule
\multirow{2}{*}{Dataset} & \multirow{2}{*}{Method} & \multicolumn{2}{c}{10} & \multicolumn{2}{c}{20} & \multicolumn{2}{c}{30} & \multicolumn{2}{c}{50} & \multicolumn{2}{c}{100} \\
\cmidrule(lr){3-12}
 & & Accuracy & Loss & Accuracy & Loss & Accuracy & Loss & Accuracy & Loss & Accuracy & Loss \\
\midrule
\multirow{4}{*}{MNIST}
& Xavier    & 31.13 & 2.281 & 35.03 & 2.078 & 45.05 & 1.771 & 58.45 & 1.227 &64.02 &1.139\\
& Xavier + BN   & 22.46 & 2.267 & 33.73 & 2.053 & 37.13 & 2.042 & 39.78 & 1.944 &57.51&1.464\\
& Xavier + LN  & 28.52 & 2.411 & 41.54 & 1.796 & 41.94 & 1.886 & 54.97 & 1.362 &65.11&1.093\\
& Proposed  & \textbf{37.32} & \textbf{2.204} & \textbf{46.79} & \textbf{1.656} & \textbf{48.60} & \textbf{1.645} & \textbf{61.54} & \textbf{1.131} & \textbf{68.44} & \textbf{1.043} \\

\midrule
\multirow{4}{*}{FMNIST}
& Xavier    & 36.16 & 2.320 & 41.69 & 1.814 & 53.86 & 1.459 & 64.53 & 1.140 &63.58 &1.048\\
& Xavier + BN   & 35.44 & \textbf{2.136} & 38.58 & 1.925 & 40.16 & 1.819 & 53.93 & 1.728 &59.78&1.237\\
& Xavier + LN  & 34.94 & 2.362 & 37.90 & 1.793 & 53.27 & 1.470 & 59.50 & 1.198 &62.01&1.073\\
& Proposed  & \textbf{37.31} & 2.217 & \textbf{49.25} & \textbf{1.651} & \textbf{55.19} & \textbf{1.372} & \textbf{66.14} & \textbf{1.057} & \textbf{67.58} & \textbf{0.914} \\
\bottomrule
\end{tabular}}
\end{table*}

\subsection{Physics-Informed Neural Networks}\label{4.2}
Xavier initialization is the most commonly employed method for training PINNs~\citep{jin2021nsfnets, son2023enhanced, yao2023multiadam, gnanasambandam2023self}. In this section, we experimentally demonstrate that the proposed method is more robust across different network sizes and achieves higher data efficiency compared to Xavier initialization with or without normalization methods.

\textbf{Experimental Setting.} All experiments on Physics-Informed Neural Networks~(PINNs) use full-batch training with a learning rate of 0.001. In this section, we solve the Allen-Cahn, Burgers, Diffusion, and Poisson equations using a tanh FFNN-based PINN with 20,000 collocation points. Details on the PDEs are provided in Appendix~\ref{pdes}.

\textbf{Network Size Independence in PINNs.} 
We construct eight tanh FFNNs, each with $16$ nodes per hidden layer and $5, 10, 20, 30, 40, 50, 60,$ or $80$ hidden layers. As shown in Table~\ref{table4}, for the Allen-Cahn and Burgers' equations, Xavier+BN and Xavier+LN achieve the lowest loss at a network depth of 30. However, their loss gradually increases as depth grows. In contrast, the proposed method achieves the lowest loss at depths of 50 and 60, respectively, maintaining strong learning performance even in deeper networks. For the Diffusion and Poisson equations, Xavier+LN achieves the lowest loss at depths of 5 and 10, respectively. While all methods exhibit increasing loss as network depth increases, the proposed method consistently maintains lower loss in deeper networks. Similar trends are observed with 32 nodes. Across all tested network sizes and PDEs, the proposed method consistently achieves the lowest loss. The proposed method eliminates the computational overhead and hyperparameter tuning required for normalization methods.

\begin{table*}[ht!]
\centering
\caption{A PINN loss is presented for FFNNs with varying numbers of layers~$(5, 10, 20, 30, 40, 50, 60, 80)$ using the tanh activation function. The top table shows results with $16$ nodes per layer, and the bottom table shows results with $32$ nodes per layer. All models were trained for $300$ iterations using Adam and $300$ iterations using L-BFGS. The median PINN loss at the final iteration for the Burgers, Allen–Cahn, Diffusion, and Poisson equations, computed over 5 runs, is presented.}
\label{table4}

\resizebox{12cm}{!}{%
\begin{tabular}{l cc cc cc cc cc cc cc cc}
\toprule
\textbf{Allen-Cahn (16 Nodes)} & 5 & 10 & 20 & 30 & 40 & 50 & 60 & 80 \\
\midrule
Xavier      & 9.58e-04 & 8.16e-04 & \underline{7.61e-04} & 1.06e-03 & 1.1e-03 & 1.24e-03 & 3.55e-03 & 1.81e-03 \\
Xavier + BN      & 1.42e-03 &8.17e-04&8.56e-04&\underline{7.07e-04}&7.77e-04&8.87e-04&9.11e-04&2.15e-03\\
Xavier + LN      & 6.29e-01 &1.77e-03&\underline{6.98e-04}&1.27e-03&1.82e-03&6.65e-01&3.29e-01&5.86e-01\\
Proposed    & \textbf{9.21e-04} & \textbf{7.29e-04} & \textbf{5.76e-04} & \textbf{5.29e-04} & \textbf{5.37e-04} & \underline{\textbf{4.03e-04}} & \textbf{4.73e-04} & \textbf{5.77e-04} \\
\midrule
\textbf{Burgers (16 Nodes)} & 5 & 10 & 20 & 30 & 40 & 50 & 60 & 80 \\
\midrule
Xavier      & \underline{6.97e-03} & 1.11e-02 & 7.9e-03 & 9.71e-03 & 2.45e-02 & 2.65e-02 & 6.5e-02 & 5.71e-02 \\
Xavier + BN  & 8.07e-03 &7.72e-03&\underline{6.24e-03}&1.70e-02&1.50e-02&1.85e-02&2.91e-02&6.84e-02 \\
Xavier + LN  & 3.89e-02 &1.88e-02&9.48e-03&\underline{9.28e-03}&2.46e-02&3.30e-02&6.91e-02&4.42e-02 \\
Proposed    & \textbf{6.19e-03} & \textbf{5.08e-03} & \textbf{5.28e-03} & \textbf{9.31e-04} & \textbf{3.56e-03} & \textbf{8.27e-04} & \underline{\textbf{3.43e-04}} & \textbf{2.05e-03} \\
\midrule
\textbf{Diffusion (16 Nodes)} & 5 & 10 & 20 & 30 & 40 & 50 & 60 & 80 \\
\midrule
Xavier      &\underline{2.52e-03}&4.82e-03&9.69e-03&1.33e-02&2.08e-02&1.50e-02&2.92e-02&7.24e-02 \\
Xavier + BN &\underline{2.89e-03}&5.77e-03&1.05e-02&9.65e-03&2.76e-02&1.07e-02&9.07e-03&1.43e-02 \\
Xavier + LN &\underline{1.72e-03}&6.10e-03&8.04e-03&9.48e-03&2.14e-02&7.59e-03&2.05e-02&2.21e-02 \\
Proposed    &\underline{\textbf{9.14e-04}}&\textbf{2.59e-03}&\textbf{2.40e-03}&\textbf{1.01e-03}&\textbf{1.97e-03}&\textbf{1.21e-03}&\textbf{1.12e-03}&\textbf{1.91e-03} \\
\midrule
\textbf{Poisson (16 Nodes)} & 5 & 10 & 20 & 30 & 40 & 50 & 60 & 80 \\
\midrule
Xavier      &\underline{1.52e-02}&2.87e-02&1.28e-01&9.82e-02&1.15e-01&1.37e-01&
1.82e-01&2.55e-01\\
Xavier + BN &\underline{1.62e-02}&2.02e-02&8.72e-02&1.12e-01&2.45e-01&9.85e-02&
1.00e-01&1.34e-01\\
Xavier + LN &5.39e-01&\underline{4.40e-02}&1.34e-01&3.91    &2.52e+02&2.58    &
9.79e+02&N/A\\
Proposed    &\underline{\textbf{1.37e-02}}&\textbf{1.70e-02}&\textbf{4.62e-02}&\textbf{2.43e-02}&\textbf{3.75e-02}&\textbf{4.03e-02}&
\textbf{6.07e-02}&\textbf{6.01e-02}\\
\bottomrule
\end{tabular}}

\vspace{0.3cm} 

\resizebox{12cm}{!}{%
\begin{tabular}{l cc cc cc cc cc cc cc cc}
\toprule
\textbf{Allen-Cahn (32 Nodes)} & 5 & 10 & 20 & 30 & 40 & 50 & 60 & 80 \\
\midrule
Xavier      & 3.13e-01 & 5.03e-02 & 3.64e-03 & \underline{2.37e-03} & 4.03e-03 & 5.27e-03 & 1.73e-02 & 6.94e-01 \\
Xavier + BN  & 4.05e-01 &8.85e-04&8.41e-04&7.82e-04&9.97e-04&\underline{6.80e-04}&9.34e-04&6.94e-01 \\
Xavier + LN  & 3.31e-01 &2.10e-03&\underline{5.99e-04}&6.71e-04&1.49e-03&1.29e-03&3.31e-02&6.93e-01 \\
Proposed    & \textbf{1.04e-03} & \textbf{6.92e-04} & \textbf{5.34e-04} & \textbf{4.26e-04} & \underline{\textbf{3.31e-04}} & \textbf{3.52e-04} & \textbf{3.85e-04} & \textbf{5.96e-04} \\
\midrule
\textbf{Burgers (32 Nodes)} & 5 & 10 & 20 & 30 & 40 & 50 & 60 & 80 \\
\midrule
Xavier      & 1.12e-02 & 3.53e-03 & 2.72e-03 & \underline{1.81e-03} & 7.60e-03 & 8.56e-03 & 9.86e-03 & 1.66e-01 \\
Xavier + BN  & 5.88e-03 &\underline{\textbf{1.04e-03}}&1.79e-03&2.80e-03&5.95e-03&3.66e-02&6.60e-02&1.66e-01 \\
Xavier + LN  & 4.31e-02 &1.21e-02&\underline{1.88e-03}&7.22e-03&5.54e-03&8.46e-03&9.04e-03&4.86e-02 \\
Proposed    & \textbf{4.14e-03} & 4.11e-03 & \textbf{1.58e-03} & \textbf{1.29e-03} & \textbf{7.96e-04} & \underline{\textbf{5.85e-04}} & \textbf{9.80e-04} & \textbf{1.47e-03} \\
\midrule
\textbf{Diffusion (32 Nodes)} & 5 & 10 & 20 & 30 & 40 & 50 & 60 & 80 \\
\midrule
Xavier      &\underline{1.69e-03}&6.85e-03&7.63e-03&4.50e-03&8.98e-03&5.67e-03&6.33e-01&1.59 \\
Xavier + BN &\underline{1.68e-03}&2.66e-03&1.08e-02&6.00e-03&8.58e-03&6.60e-03&5.66e-02&1.69e+02 \\
Xavier + LN &\underline{8.16e-04}&2.85e-03&8.46e-03&4.57e-03&9.40e-03&1.04e-02&2.42e-01&1.67e+02 \\
Proposed    &\underline{\textbf{2.89e-04}}&\textbf{8.03e-04}&\textbf{5.25e-04}&\textbf{5.07e-04}&\textbf{5.33e-04}&\textbf{6.17e-04}&\textbf{9.80e-04}&\textbf{1.53e-03} \\
\midrule
\textbf{Poisson (32 Nodes)} & 5 & 10 & 20 & 30 & 40 & 50 & 60 & 80 \\
\midrule
Xavier      &\underline{1.09e-02}&1.33e-02&3.13e-02&7.69e-02&6.72e-02&8.90e-02&
9.68e+02&1.46e+02\\
Xavier + BN &\underline{1.14e-02}&1.47e-02&2.68e-02&3.55e-02&8.25e-02&8.97e-02&
4.50e-02&7.75e-01\\
Xavier + LN &2.36e-02&\underline{2.18e-02}&3.07e-02&3.85e-01&1.40    &4.69    &
2.60    &6.14\\
Proposed    &\textbf{9.63e-03}&\underline{\textbf{8.29e-03}}&\textbf{1.41e-02}&\textbf{1.88e-02}&\textbf{1.65e-02}&\textbf{1.85e-02}&\textbf{1.73e-02}&\textbf{3.59e-02}\\
\bottomrule
\end{tabular}}
\end{table*}

\textbf{Data Efficiency in PINNs.} Based on the results in Table~\ref{table4}, we evaluate data efficiency on a network with $30$ hidden layers, each containing $32$ nodes, where Xavier initialization achieved the lowest PINN loss. As shown in Figure~\ref{fig:burgers_diffusion}, for the Burgers equation, the Mean Absolute Error~(MAE) of the proposed initialization differs significantly from that of Xavier initialization across varying numbers of collocation points. In contrast, for the Diffusion equation, the difference in MAE between the two methods becomes more pronounced when the number of collocation points exceeds $20,000$. Additionally, Figure~\ref{fig:diffusion_8images} illustrates that increasing the number of collocation points enables PINNs with the proposed initialization to predict solutions with lower absolute error. For detailed experiments on the Burgers equation, please refer to Appendix~\ref{ae_burgers}.

\begin{figure}[t!]
\centering 
\begin{tabular}{cc}
\begin{subfigure}[b]{0.46\textwidth}
    \centering
    \includegraphics[width=\textwidth]{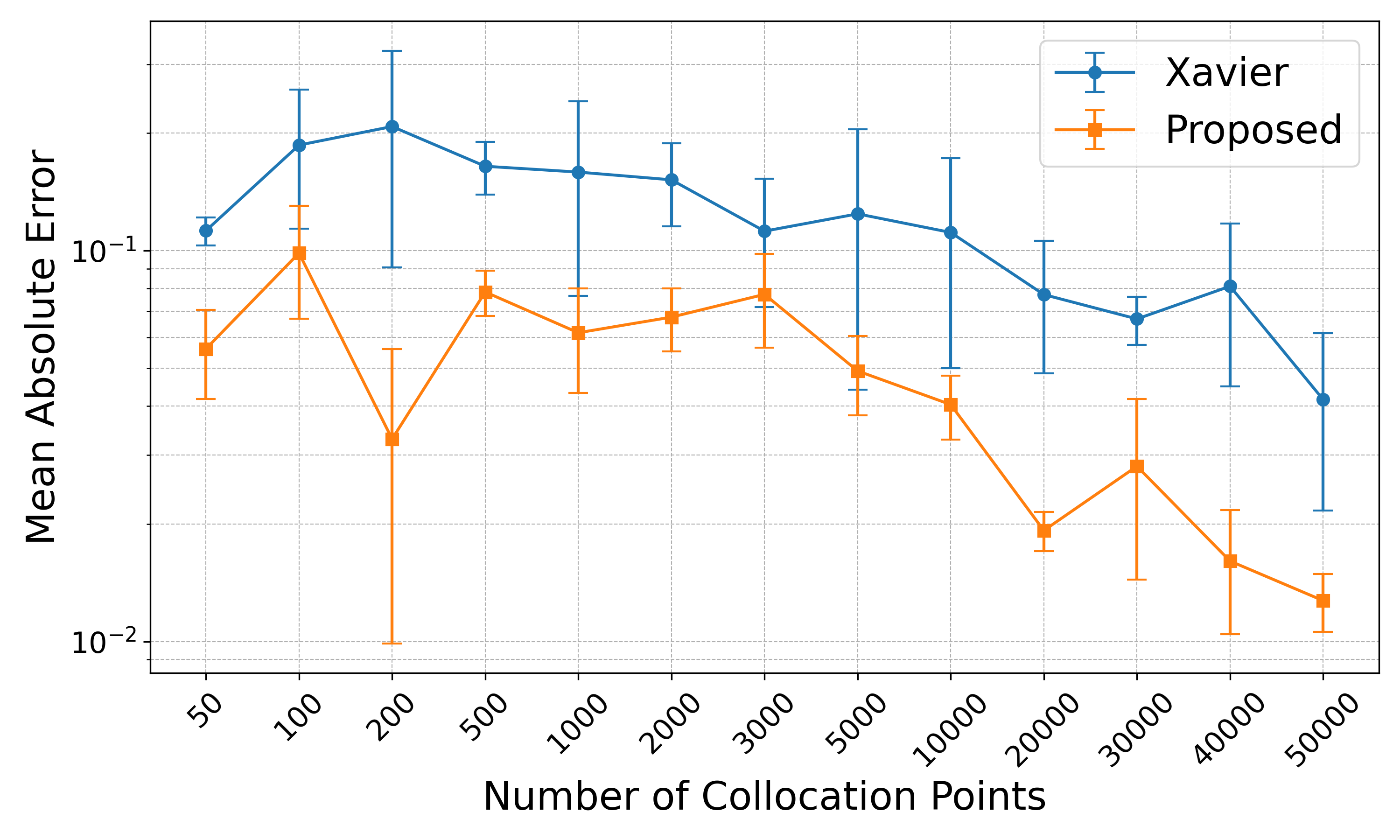}
    \caption{Burgers Equation}
\end{subfigure} &
\begin{subfigure}[b]{0.46\textwidth}
    \centering
    \includegraphics[width=\textwidth]{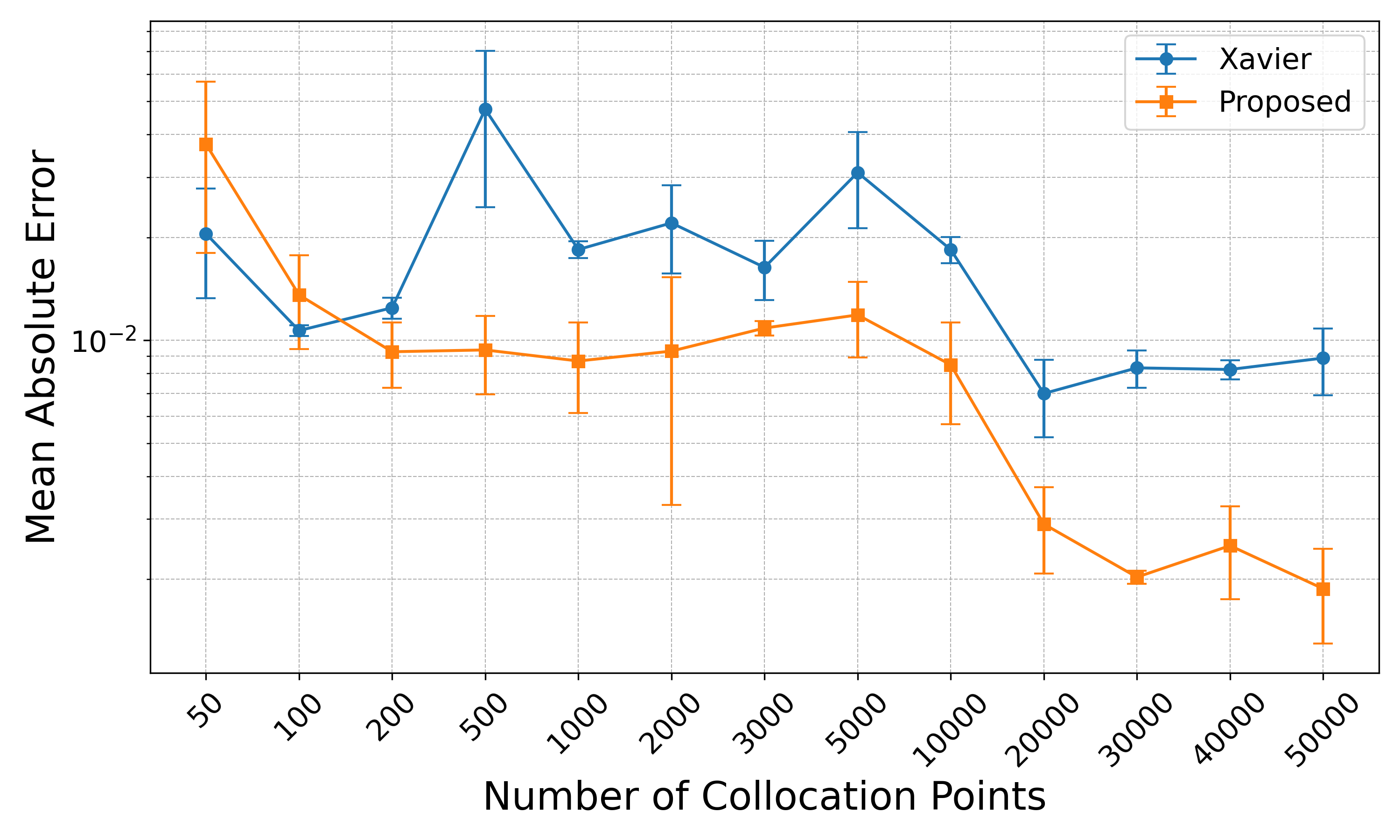}
    \caption{Diffusion Equation}
\end{subfigure} 
\end{tabular}
\caption{Mean absolute error between the exact solution and PINN-predicted solution with varying numbers of collocation points. The FFNN has $30$ hidden layers~($32$ nodes each) and is trained for $300$ iterations using Adam followed by $300$ iterations using L-BFGS. The results are averaged over $5$ experiments.} 
\label{fig:burgers_diffusion}
\end{figure}

\begin{figure}[h!]
\centering 
\includegraphics[width=0.96\textwidth]{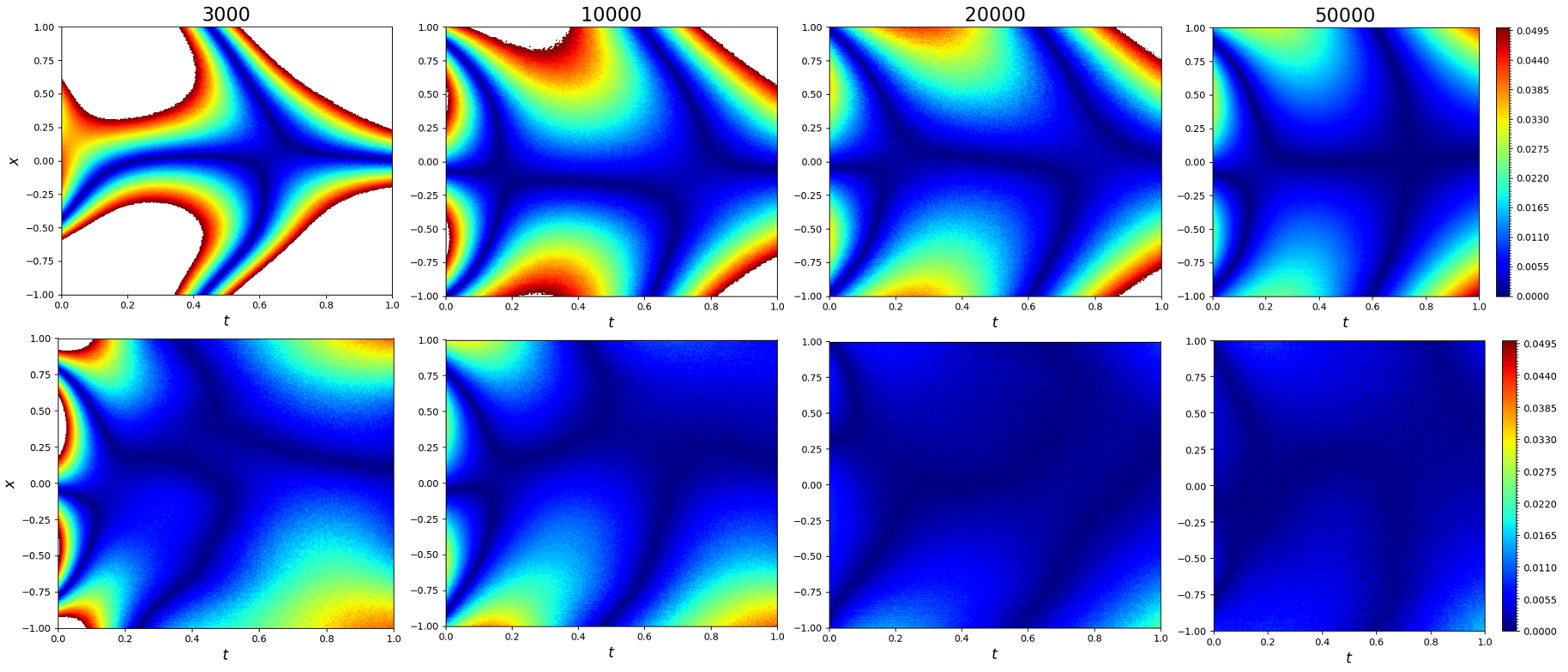}
\caption{
Absolute error between the exact solution and the PINN-predicted solution for the Diffusion equation with varying numbers of collocation points~$(3000, 10000, 20000, 50000)$ using \textbf{(upper row)} Xavier and \textbf{(lower row)} the proposed initialization. The FFNN has 30 hidden layers~($32$ nodes each) and is trained for $300$ iterations using Adam followed by $300$ iterations using L-BFGS. The color bar ranges from $0$ to $0.05$, with values outside this range shown in white.}
\label{fig:diffusion_8images}
\end{figure}

\section{Conclusion}
In this study, we proposed a novel weight initialization method for tanh neural networks, baesed on a theoretical analysis of fixed points of $\tanh(ax)$ function. The proposed method is experimentally demonstrated to achieve robustness to variations in network size without normalization methods and to exhibit improved data efficiency. Therefore, the proposed weight initialization method reduces the time and effort required for training neural networks.

\section*{Acknowledgments}
This work of Hyunwoo Lee and Hayoung Choi was supported by the National Research Foundation of Korea~(NRF) grant funded by the Korea government~(MSIT) (No. 2022R1A5A1033624 and RS-2024-00342939).
The work of Hyunju Kim was supported by the funds of the Open R$\&$D program of Korea Electric Power Corporation.(No-R23XO03).

\bibliography{iclr2025_conference}
\bibliographystyle{iclr2025_conference}

\newpage
\appendix
\section{Analysis of Signal Propagation}\label{app0}
In this section, we provide proofs for the statements presented in Section~\ref{3.1}. 
Each proof follows from the fundamental properties of $\tanh(ax)$ and analytical derivations. 
An empirical analysis is performed on the activation distribution for normally and beta-distributed input data.

\subsection{Proof of Lemma \ref{lemma:fixedpoint1}}\label{lemma1_proof}
\begin{proof}
Define $g(x) = \tanh(ax) - x$. Since $g(x)$ is continuous, and $g(-M)>0$, $g(M)<0$ for a large real number $M$, the Intermediate Value Theorem guarantees the existence of a point $x$ such that $g(x) = 0$. 

First, consider the case \(0 < a \leq 1\). Since $0 < a \leq 1$, the derivative $g'(x) = a \cdot \text{sech}^2(ax) - 1$ satisfies $-1 \leq g'(x) \leq a - 1 < 0$ for all $x$. Hence, $g(x)$ is strictly decreasing and therefore $g(x)$ has a unique root. At $x=0$, $\phi(0) = \tanh(a\cdot 0) = 0$. Hence, $x=0$ is the unique fixed point.

We consider the case $a>1$. For $0 < x \ll 1$, $\tanh(ax) - x \approx (a-1)x$. Since $a > 1, \tanh(ax)-x>0$. On the other hand, since $|\tanh(ax)|
< 1$ for all $x$, $$\lim_{x \to \infty}[-1-x]\le \lim_{x \to \infty} [\tanh(ax) - x] \le \lim_{x \to \infty} [1 - x].$$ 
By the squeeze theorem, $\lim_{x \to \infty} [\tanh(ax) - x] = -\infty$.
By the intermediate value theorem, therefore, there exists at least one $x>0$ such that $\tanh(ax)=x$. To establish the uniqueness of the positive fixed point, we investigate the derivative $g'(x) = a \sech^2(ax)-1$. We find the critical points to be $x= \pm\frac{1}{a}\sec^{-1}(\frac{1}{\sqrt{a}})$. It is straightforward to see that $g'(x) > 0$ in $\left(-\frac{1}{a}\sec^{-1}(\frac{1}{\sqrt{a}}), \frac{1}{a}\sec^{-1}(\frac{1}{\sqrt{a}})\right)$ and $g'(x) <0$ in $\mathbb{R}\backslash\left(-\frac{1}{a}\sec^{-1}(\frac{1}{\sqrt{a}}), \frac{1}{a}\sec^{-1}(\frac{1}{\sqrt{a}})\right)$. i.e. $g(x)=0$ has exactly two fixed points. Because $g(x)$ is an odd function, if $x^{\ast}$ is a solution, then $-x^{\ast}$ is also a solution. Thus, for $a>1$, 
there exists a unique positive fixed point if $x > 0$ and a unique negative fixed point if $x < 0$.
\end{proof}

\subsection{Proof of Lemma \ref{lemma:fixedpoint2}}\label{lemma2_proof}
\begin{proof}
(1) Since $(\tanh(ax))'=a\sech^2(ax)<1$ for all $x>0$,
it holds that $x_{n+1}=\phi_a(x_n)<x_n$ for all $n\in \mathbb{N}$. Thus the sequence $\{x_n\}_{n=1}^{\infty}$ is decreasing. Since $x_n>0$ for all $n\in \mathbb{N}$, by the monotone convergence theorem, it converges to the fixed point $x^*=0$.\\
(2) Let $x_0<\xi_a$. Since $\phi'(x)$ is decreasing for $x\geq 0$, with $\phi'(0)>1$ and $\xi_a$ is the unique fixed point for $x>0$, it holds that $x_n<x_{n+1} <\xi_a$ for all $n\in\mathbb{N}$. Thus, by the monotone convergence theorem, the sequence converges to the fixed point $\xi_a$. 
The proof is similar when $x_0 >\xi_a$. By the monotone convergence theorem, the sequence also converges to the fixed point $\xi_a$. 
\end{proof}

\subsection{Proof of Proposition~\ref{prop:fixedpoint3}}\label{proposition_proof}
\begin{proof}
Set $N = \max\{n| a_n >1\}$. Define the sequences $\{b_n\}_{n=1}^\infty$ and $\{c_n\}_{n=1}^\infty$ such that $b_n = c_n = a_n$ for $n \leq N$, with $b_n = 0$ and $c_n = 1$ for $n > N$.
Suppose that $\{\hat{\Phi}_m\}_{m=1}^\infty$ and $\{\tilde{\Phi}_m\}_{m=1}^\infty$ are sequences of functions defined as for each $m \in \mathbb{N}$
\[
\hat{\Phi}_m = \phi_{b_m} \circ \phi_{b_{m-1}} \circ \cdots \circ \phi_{b_1}, \quad \tilde{\Phi}_m = \phi_{c_m} \circ \phi_{c_{m-1}} \circ \cdots \circ \phi_{c_1}.
\]
Then the inequality $\hat{\Phi}_m \leq  \Phi_m \leq \tilde{\Phi}_m$ holds for all $m$.
By Lemma~\ref{lemma:fixedpoint1}, for any $x\geq 0$, we have $\lim_{m \to \infty} \hat{\Phi}_m = 0$ and $\lim_{m \to \infty} \tilde{\Phi}_m = 0$. Therefore, the Squeeze Theorem guarantees that $\lim_{m \to \infty} \Phi_m(x) = 0$. 
\end{proof}

\subsection{Proof of Corollary~\ref{cor:fixedpoint4}}\label{cor_proof}
\begin{proof}
Set $N = \max\{n \mid a_n < 1 + \epsilon\}$. Define the sequence $\{b_n\}_{n=1}^\infty$ such that $b_n = a_n$ for $n \leq N$, and $b_n = 1 + \epsilon$ for $n > N$. The remainder of the proof is analogous to the proof of Proposition~\ref{prop:fixedpoint3}.
\end{proof}
\newpage
\subsection{Activation Distribution Based on Input Data Distribution}\label{app1123}
\begin{figure}[h!]
\centering
\includegraphics[width=0.95\textwidth]{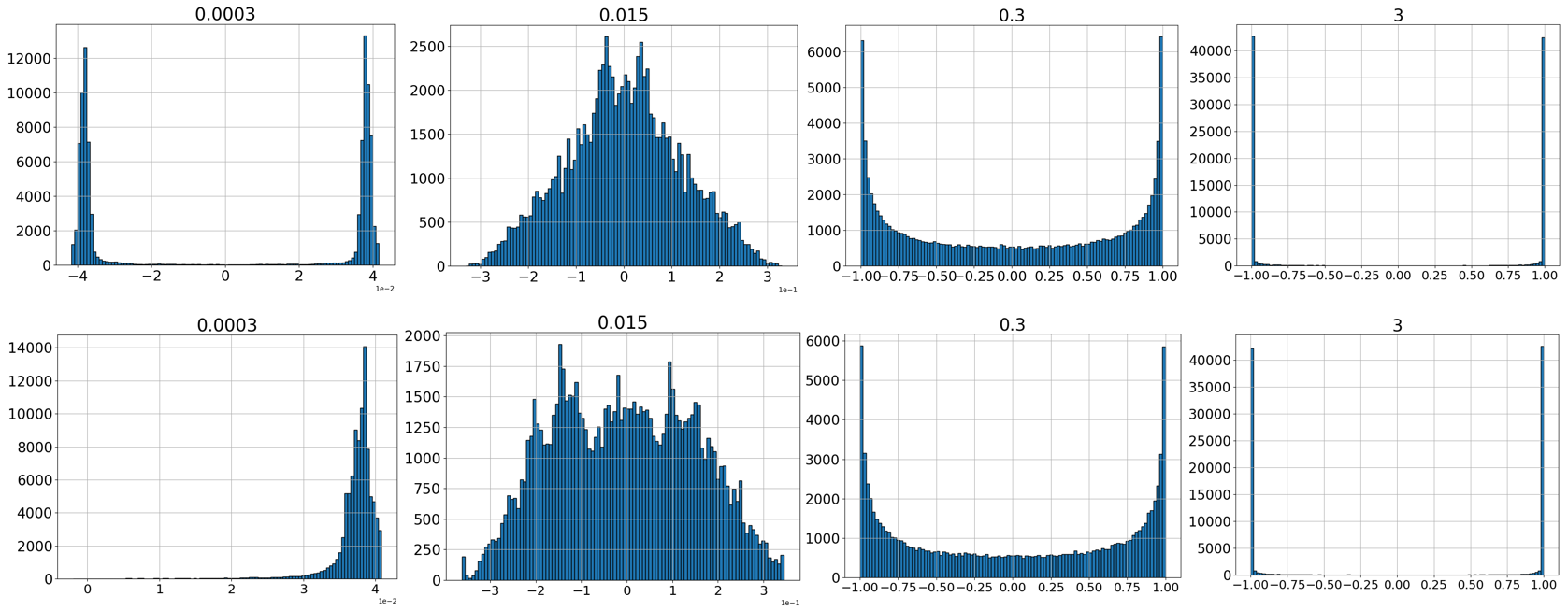}

\caption{The activation values in the 1000\textsuperscript{th} layer, with $32$ nodes per hidden layer, were analyzed using the proposed weight initialization method with $\sigma_z$ values of $0.0003$, $0.015$, $0.3$, and $3$. The \textbf{upper row} shows results for $3000$ input samples drawn from a standard normal distribution, while the \textbf{lower row} presents results for samples drawn from a Beta distribution with parameters $a=2.0$ and $b=5.0$.}
\label{std_uniform}
\end{figure}

\section{Classification Tasks}
In this section, we present experimental results for the benchmark classification datasets MNIST, Fashion-MNIST, CIFAR-10, and CIFAR-100 when employing the proposed weight initialization method.

\subsection{Width independence in Classification tasks}\label{app2}
Here, we present the detailed experimental results of Table~\ref{table1}. 
Validation accuracy and loss for MNIST, Fashion-MNIST, CIFAR-10, and CIFAR-100 are presented for tanh FFNNs with varying numbers of nodes~$(2, 8, 32, \text{and}\, 128)$, each with 20 hidden layers.

\begin{figure}[h!]
\centering 
\begin{tabular}{c}
\begin{subfigure}[b]{0.97\textwidth}
    \centering
    \includegraphics[width=\textwidth]{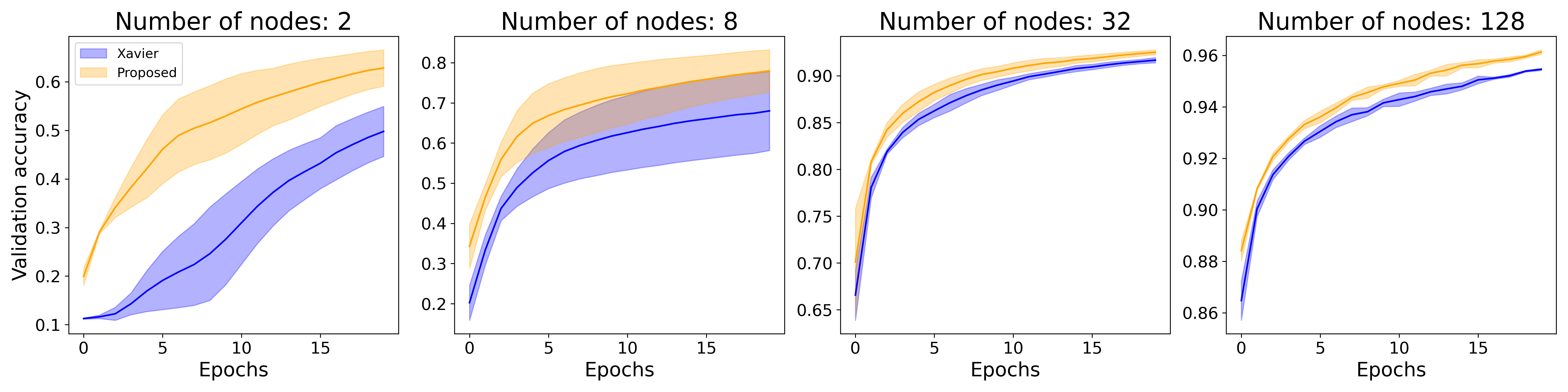}
\end{subfigure} \\
\begin{subfigure}[b]{0.97\textwidth}
    \centering
    \includegraphics[width=\textwidth]{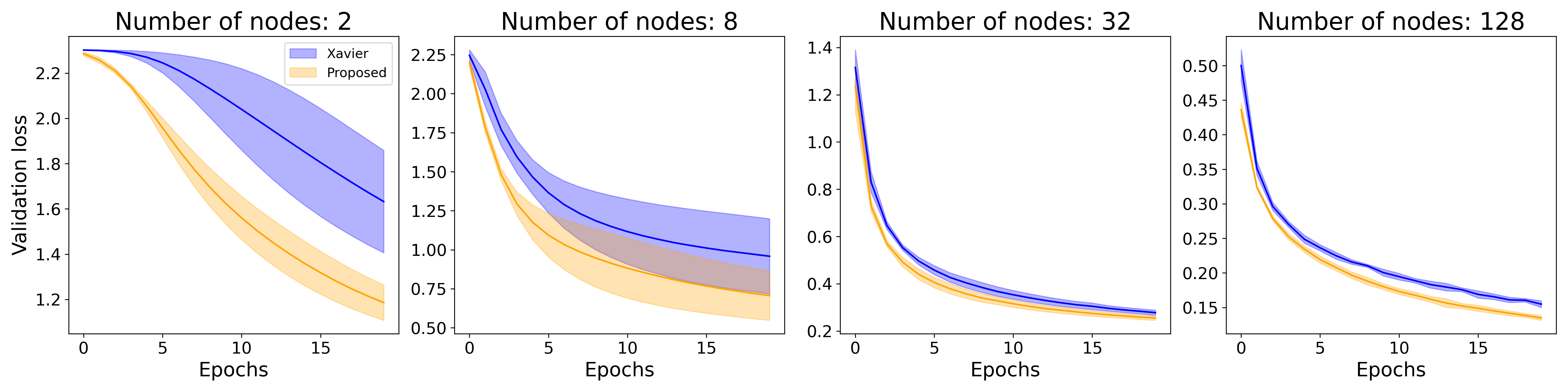}

\end{subfigure} 
\end{tabular}
\caption{Validation accuracy and loss are presented for tanh FFNNs with varying numbers of nodes~$(2, 8, 32, \text{and}\,128)$, each with 20 hidden layers. All models were trained for 20 epochs on the MNIST dataset, with 10 different random seeds.}
\end{figure}

\begin{figure}[h!]
\centering 
\begin{tabular}{c}
\begin{subfigure}[b]{0.97\textwidth}
    \centering
    \includegraphics[width=\textwidth]{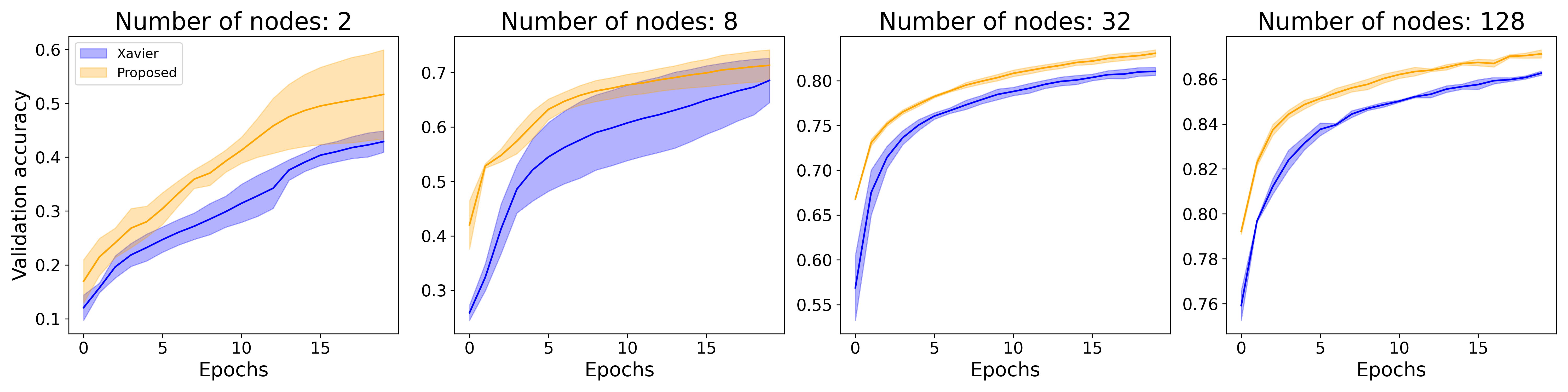}
\end{subfigure} \\
\begin{subfigure}[b]{0.97\textwidth}
    \centering
    \includegraphics[width=\textwidth]{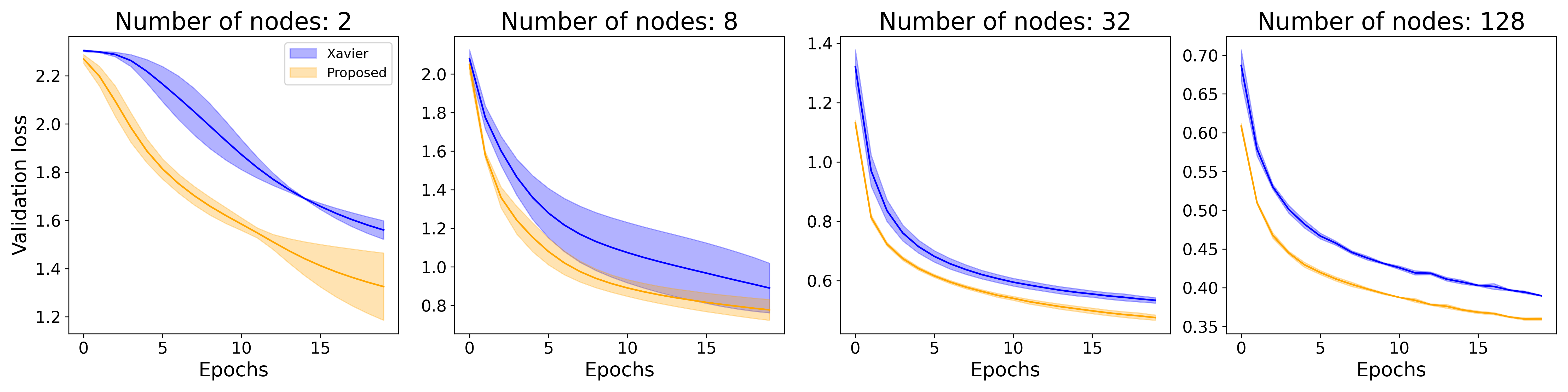}
\end{subfigure} 
\end{tabular}
\caption{Validation accuracy and loss are presented for tanh FFNNs with varying numbers of nodes~$(2, 8, 32, \text{and}\,128)$, each with 20 hidden layers. All models were trained for 20 epochs on the Fashion MNIST dataset, with 10 different random seeds.}
\end{figure}

\begin{figure}[h!]
\centering 
\begin{tabular}{c}
\begin{subfigure}[b]{0.97\textwidth}
    \centering
    \includegraphics[width=\textwidth]{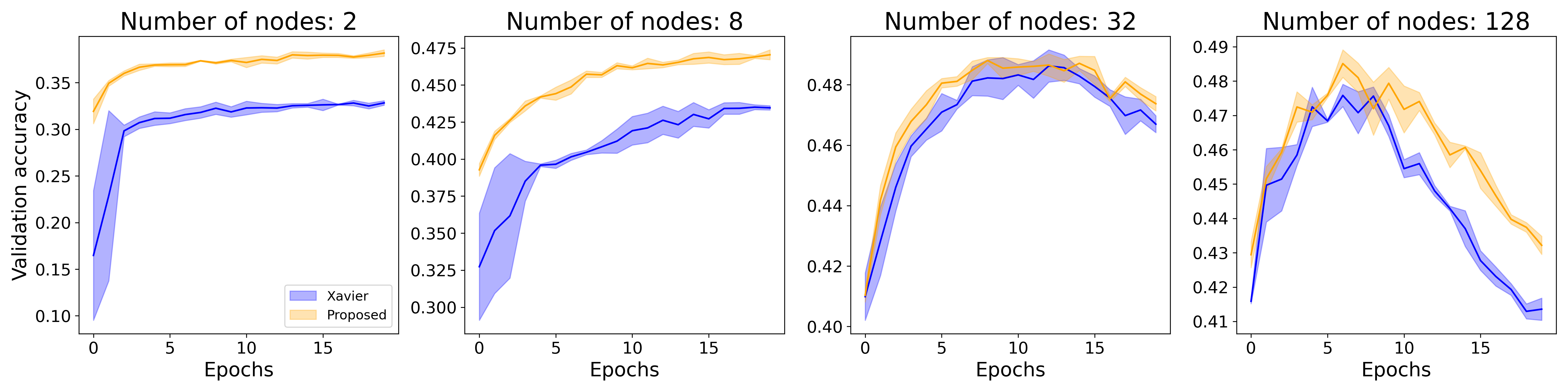}
\end{subfigure} \\
\begin{subfigure}[b]{0.97\textwidth}
    \centering
    \includegraphics[width=\textwidth]{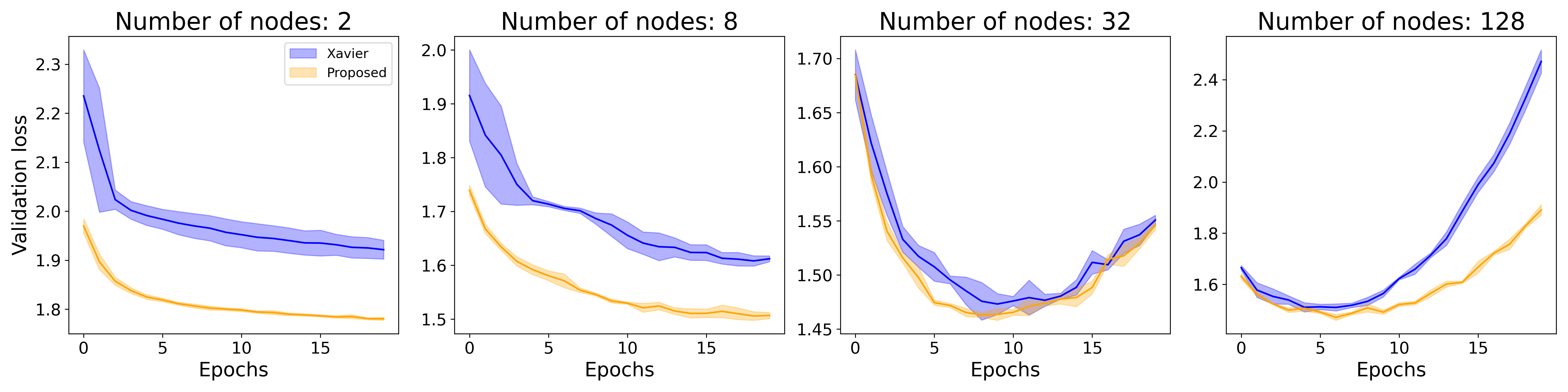}
\end{subfigure} 
\end{tabular}
\caption{Validation accuracy and loss are presented for tanh FFNNs with varying numbers of nodes~$(2, 8, 32, \text{and}\,128)$, each with 20 hidden layers. All models were trained for 20 epochs on the CIFAR-10 dataset, with 10 different random seeds.}
\end{figure}

\newpage 

\begin{figure}[h!]
\centering 
\begin{tabular}{c}
\begin{subfigure}[b]{0.97\textwidth}
    \centering
    \includegraphics[width=\textwidth]{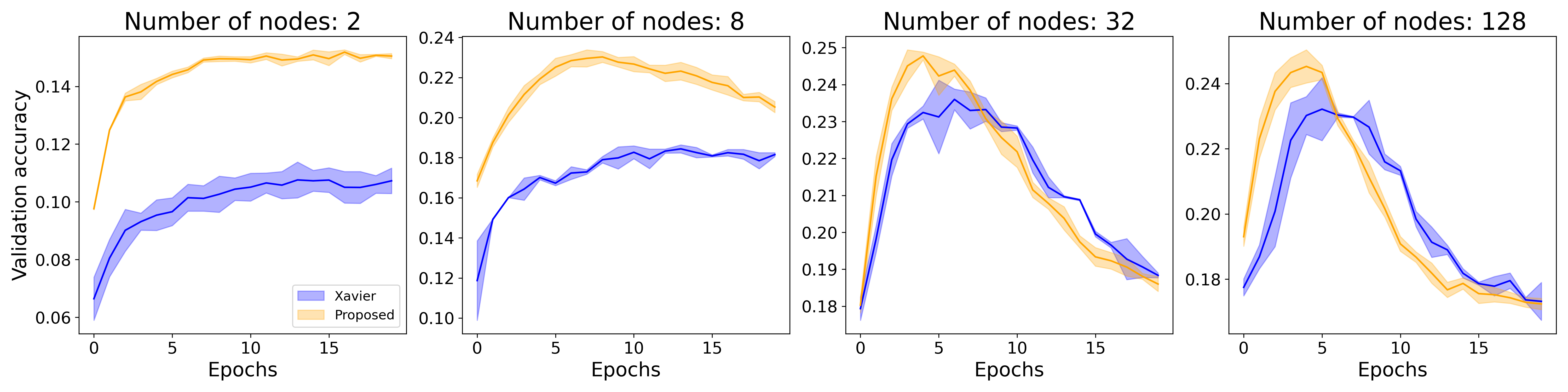}
\end{subfigure} \\
\begin{subfigure}[b]{0.97\textwidth}
    \centering
    \includegraphics[width=\textwidth]{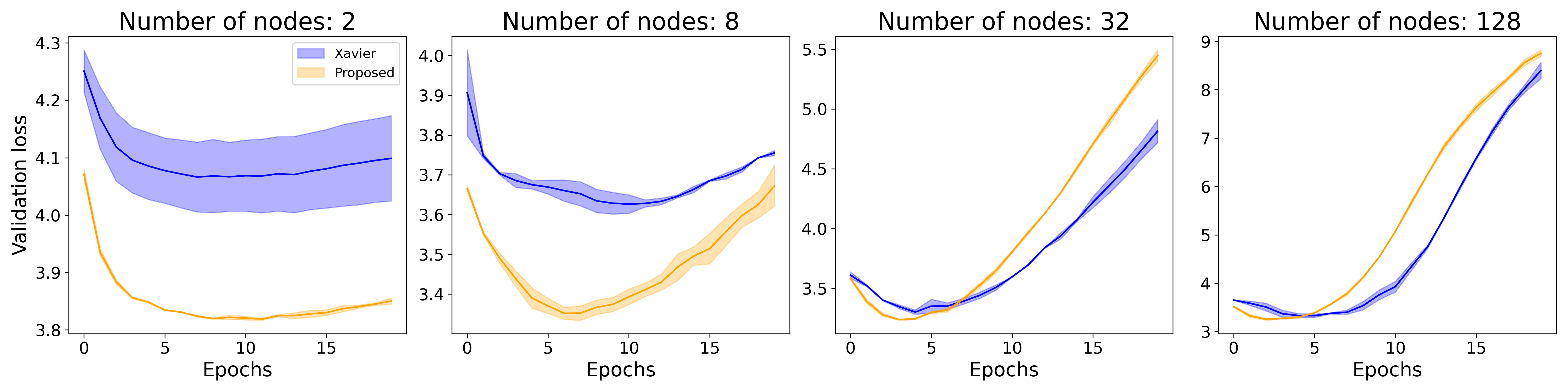}
\end{subfigure} 
\end{tabular}
\caption{Validation accuracy and loss are presented for tanh FFNNs with varying numbers of nodes~$(2, 8, 32, \text{and}\,128)$, each with 20 hidden layers. All models were trained for 20 epochs on the CIFAR-100 dataset, with 10 different random seeds.}
\end{figure}

\newpage

\subsection{Non-uniform Hidden Layer Dimensions}\label{nonuniform}
Tanh neural networks have been less widely employed compared to ReLU networks due to their higher computational complexity, susceptibility to the vanishing gradient problem, and the superior empirical performance of ReLU in various deep learning tasks.
However, the recent success of PINNs using tanh neural networks has renewed interest in their application. This section compares the performance of four initialization methods on architectures that are generally challenging to train: (1) tanh activation with Xavier initialization, (2) tanh activation with the proposed initialization, (3) ReLU activation with He initialization + BN, and (4) ReLU activation with orthogonal initialization. The experiments are performed on FFNN and autoencoder models.

\textbf{FFNN.} The experiment is performed using an FFNN with a structure in which hidden layers alternated between 16 and 4 nodes, repeated 50 times, and trained over 100 epochs. The results are shown in Figure~\ref{fig:varyingdim}~(a). Both Xavier and the proposed method successfully train the network, with the proposed method showing overall better performance. Due to its strong performance despite substantial variations in hidden layer sizes, the proposed method is further evaluated on autoencoders with large variations in layer sizes, as shown in Figure~\ref{fig:varyingdim}~(b).

\textbf{Autoencoder.} The autoencoder architecture consists of an encoder and a decoder, both employing batch normalization and dropout (0.2) for regularization. The encoder compresses the input through layers of 512, 256, and 128 units before mapping to a 64-dimensional latent space. The decoder reconstructs the input by symmetrically expanding the latent space through layers of 128, 256, and 512 units, followed by an output layer with sigmoid activation. In Figure~\ref{fig:varyingdim}~(b), the model is trained on the MNIST dataset with a batch size of $256$, while in (c), it is trained on the FMNIST dataset with a batch size of $512$.

\begin{figure}[h!]
\centering 
\begin{tabular}{ccc}
\begin{subfigure}[b]{0.31\textwidth}
    \centering
    \includegraphics[width=\textwidth]{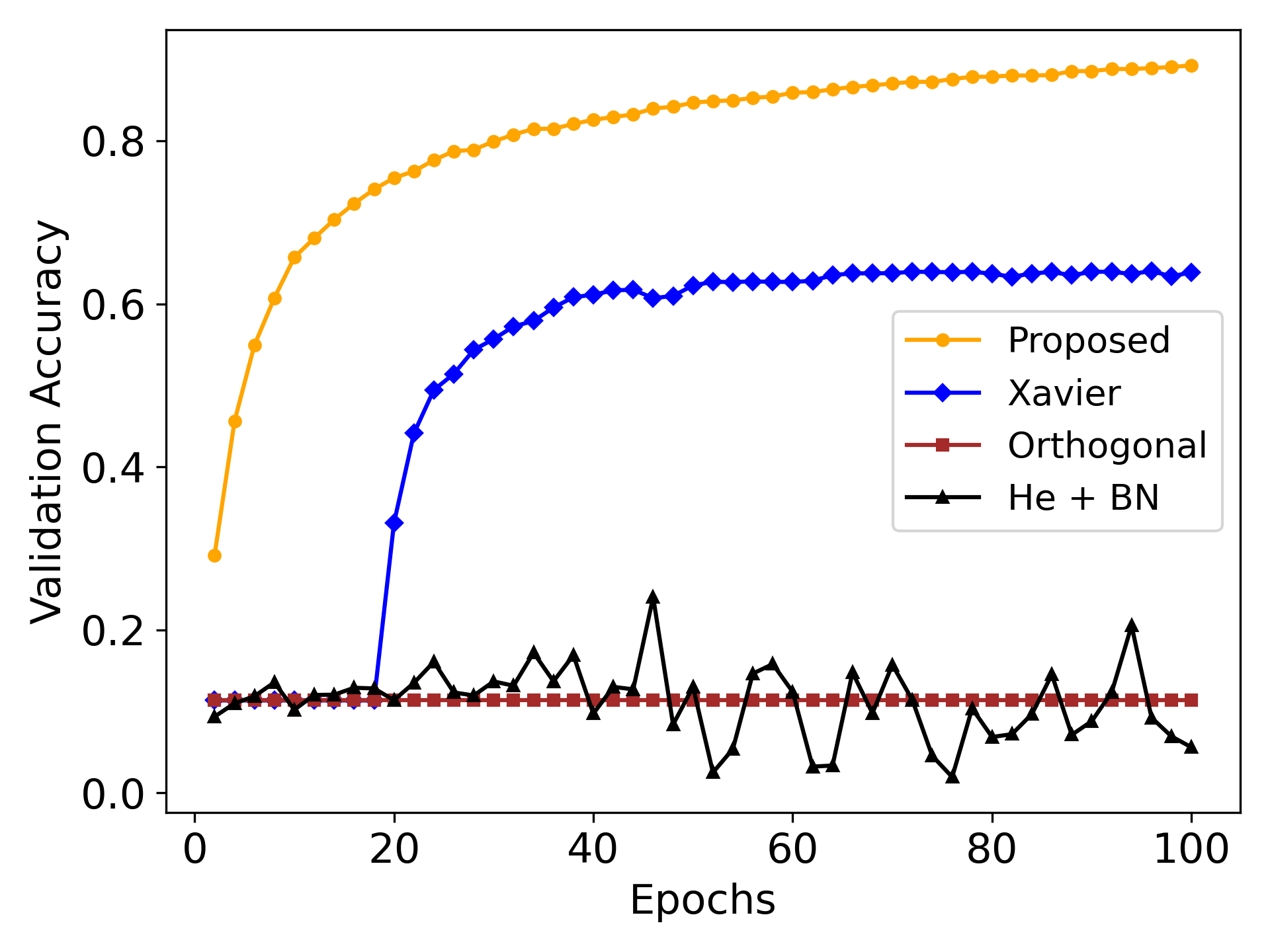}
    \caption{FFNN~(MNIST)}
\end{subfigure} &
\begin{subfigure}[b]{0.31\textwidth}
    \centering
    \includegraphics[width=\textwidth]{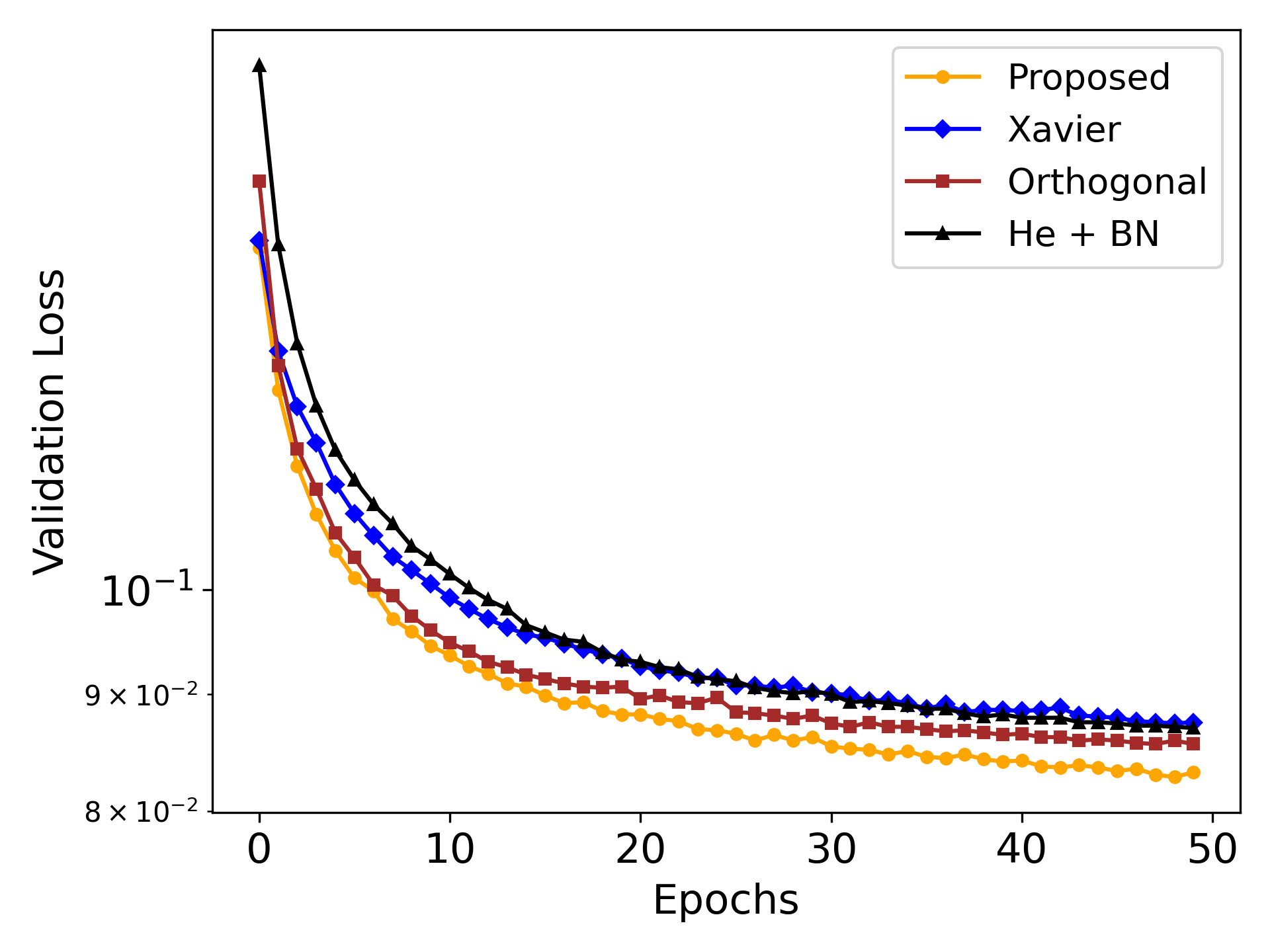}
    \caption{Autoencoder~(MNIST)}
\end{subfigure} &
\begin{subfigure}[b]{0.31\textwidth}
    \centering
    \includegraphics[width=\textwidth]{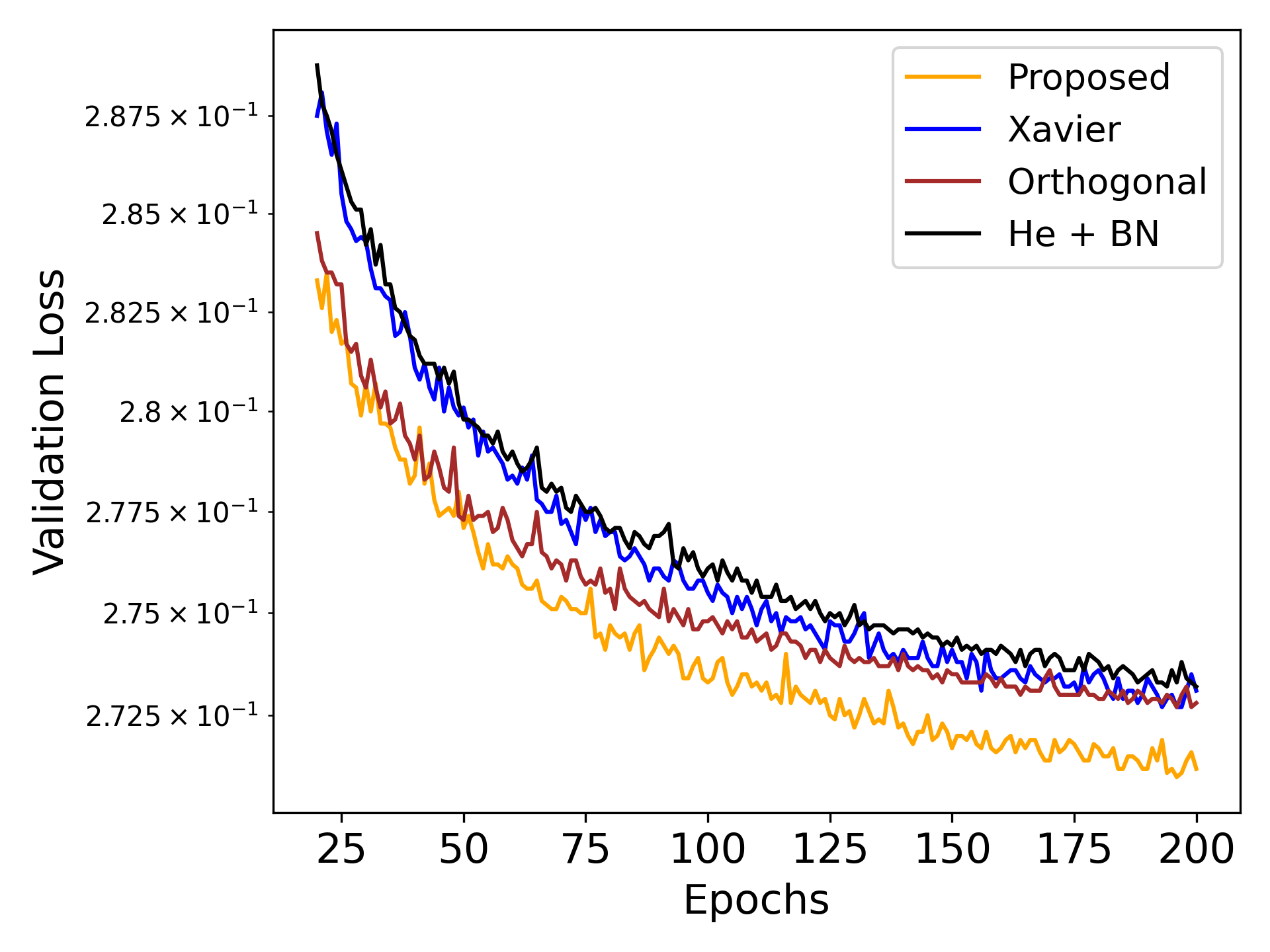}
    \caption{Autoencoder~(FMNIST)}
\end{subfigure} 
\end{tabular}
\caption{\textbf{(a)} Validation loss for an FFNN with alternating hidden layers of $16$ and $4$ nodes, repeated $50$ times, comparing four methods:
Tanh with Xavier initialization,
Tanh with the proposed initialization,
ReLU with He initialization + BN, and ReLU with orthogonal initialization. \textbf{(b)} Validation loss for an autoencoder with encoder-decoder layers of $512$, $256$, $128$, and $64$ units, comparing the same four methods. \textbf{(c)} Same as (b), but on the FMNIST dataset.}
\label{fig:varyingdim}
\end{figure}

\newpage
\section{Physics-Informed Neural Networks}
In this section, we present additional experiments employing the proposed weight initialization method in Physics-Informed Neural Networks~(PINNs). Specifically, we empirically analyze its relationship with different activation functions~(Swish, ELU, Sigmoid, and ReLU) and its impact on various PDE problems~(Burgers' equation, Allen-Cahn equation, Poisson equation, and Diffusion equation).

\subsection{PDE Details}\label{pdes}
Here we present the differential equations that we study in experiments.

\medskip
\textbf{Allen-Cahn Equation.} The diffusion coefficient is set to $d = 0.01$. The initial condition is defined as $u(x, 0) = x^2 \cos(\pi x)$ for $x \in [-1, 1]$, with boundary conditions $u(-1, t) = -1$ and $u(1, t) = -1$, applied over the time interval $t \in [0, 1]$. The Allen-Cahn equation is expressed as
\begin{equation*}\label{eqn-AC}
\frac{\partial u}{\partial t} - d \frac{\partial^2 u}{\partial x^2} =  -\frac{u^3 + u}{d}, 
\end{equation*}
where $u(x, t)$ represents the solution, $d$ is the diffusion coefficient, and the nonlinear term $u^3 - u$ models the phase separation dynamics. 
\medskip

\textbf{Burgers' Equation.} The Burgers' equation, a viscosity coefficient of $\nu = 0.01$ is employed. The initial condition is given by $u(x, 0) = -\sin(\pi x)$ for $x \in [-1, 1]$, with boundary conditions $u(-1, t) = 0$ and $u(1, t) = 0$ imposed for $t \in [0, 1]$. The Burgers' equation is expressed as
\begin{equation*}\label{eqn-Burgers}
    \frac{\partial u}{\partial t} + u \frac{\partial u}{\partial x} = \nu \frac{\partial^2 u}{\partial x^2},
\end{equation*}
where $u(x, t)$ is the velocity field, and $\nu$ is the viscosity coefficient.
\medskip

\textbf{Diffusion Equation.} The diffusion equation includes a time-dependent source term and is defined over the spatial domain \(x \in [-1, 1]\) and temporal interval \(t \in [0, 1]\). The initial condition is specified as \(u(x, 0) = \sin(\pi x)\), with Dirichlet boundary conditions \(u(-1, t) = 0\) and \(u(1, t) = 0\). The diffusion equation is expressed as
\begin{equation*}\label{eqn-Diffusion}
\frac{\partial u}{\partial t} - \frac{\partial^2 u}{\partial x^2} = e^{-t} \left(\sin(\pi x) - \pi^2 \sin(\pi x)\right),
\end{equation*}
where \(u(x, t)\) is the solution. 
\medskip

\textbf{Poisson Equation.} The Poisson equation is defined over the spatial domain \(x \in [0, 1]\) and \(y \in [0, 1]\). The Poisson equation is expressed as
\begin{equation*}\label{eqn-Poisson}
\frac{\partial^2 u}{\partial x^2} + \frac{\partial^2 u}{\partial y^2} = f(x, y),
\end{equation*}
where \(u(x, y)\) is the solution, and \(f(x, y)\) is the source term given by
\[
f(x, y) = 2\pi^2 \sin(\pi x) \sin(\pi y).
\]

\medskip

\subsection{Effect of activation function on PINNs}\label{act}
We experimentally demonstrate that when employing the proposed weight initialization method, the absolute error between the exact solution and the PINN-predicted solution is smaller when using the tanh activation compared to ReLU, sigmoid, Swish~\citep{ramachandran2017searching}, and ELU~\citep{clevert2015fast}, as shown in Figure~\ref{fig:act}. The FFNN consists of 30 hidden layers (32 nodes each) and is trained for 300 iterations using Adam, followed by 300 iterations using L-BFGS. For the Burgers' and diffusion equations, the PINN with tanh activation achieves the closest approximation to the exact solution. It is well established that PINNs with tanh activation effectively approximate solutions to the Burgers' and diffusion equations. Given that the choice of activation function and weight initialization are dependent, this result provides valuable insights into the interaction between initialization methods and activation functions in PINNs..

\bigskip 

\begin{figure}[h!]
\centering 
\includegraphics[width=1\textwidth]{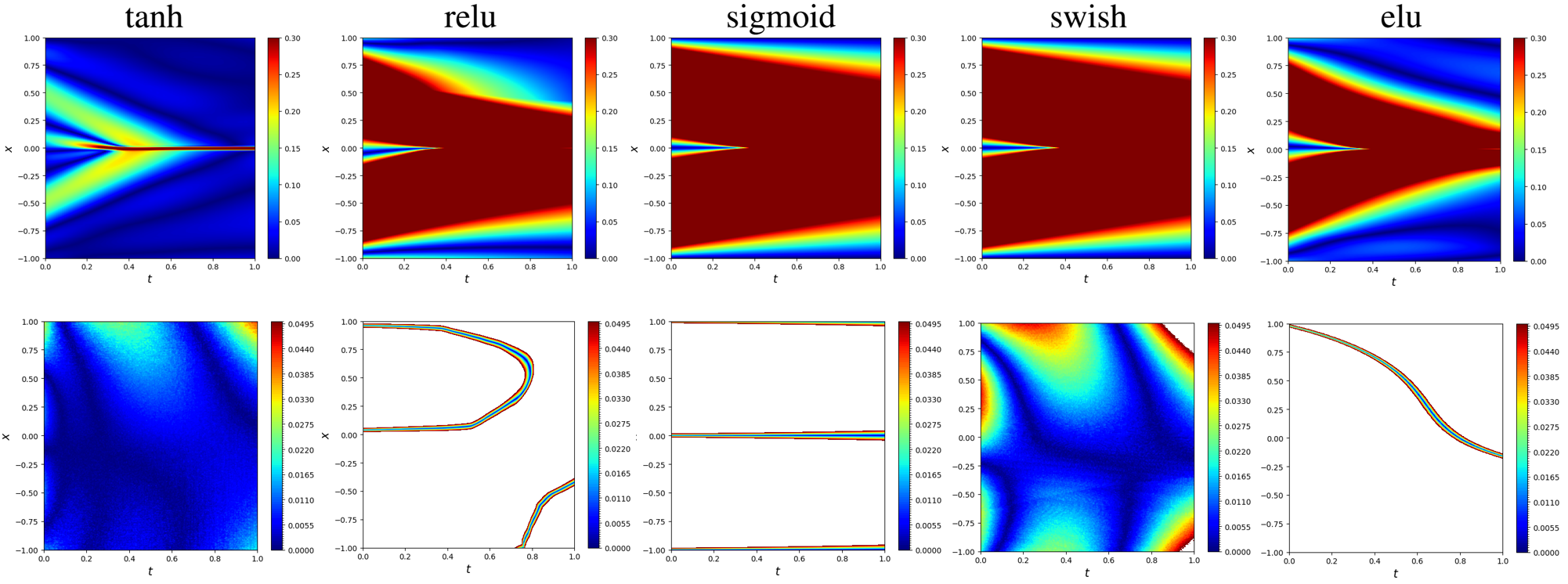}
\caption{
Absolute error for~\textbf{(upper row)} the Burgers' equation and~\textbf{(lower row)} the diffusion equation with varying activation functions. Values outside the color bar range are depicted in white.}
\label{fig:act}
\end{figure}
Figure~\ref{fig:acti22} presents experiments that employ the proposed initialization and Xavier initialization for PDE problems using the Swish activation function, which is widely used in PINNs. 

\begin{figure}[h!]
\centering 
\begin{tabular}{cccc}
\begin{subfigure}[b]{0.23\textwidth}
    \centering
    \includegraphics[width=\textwidth]{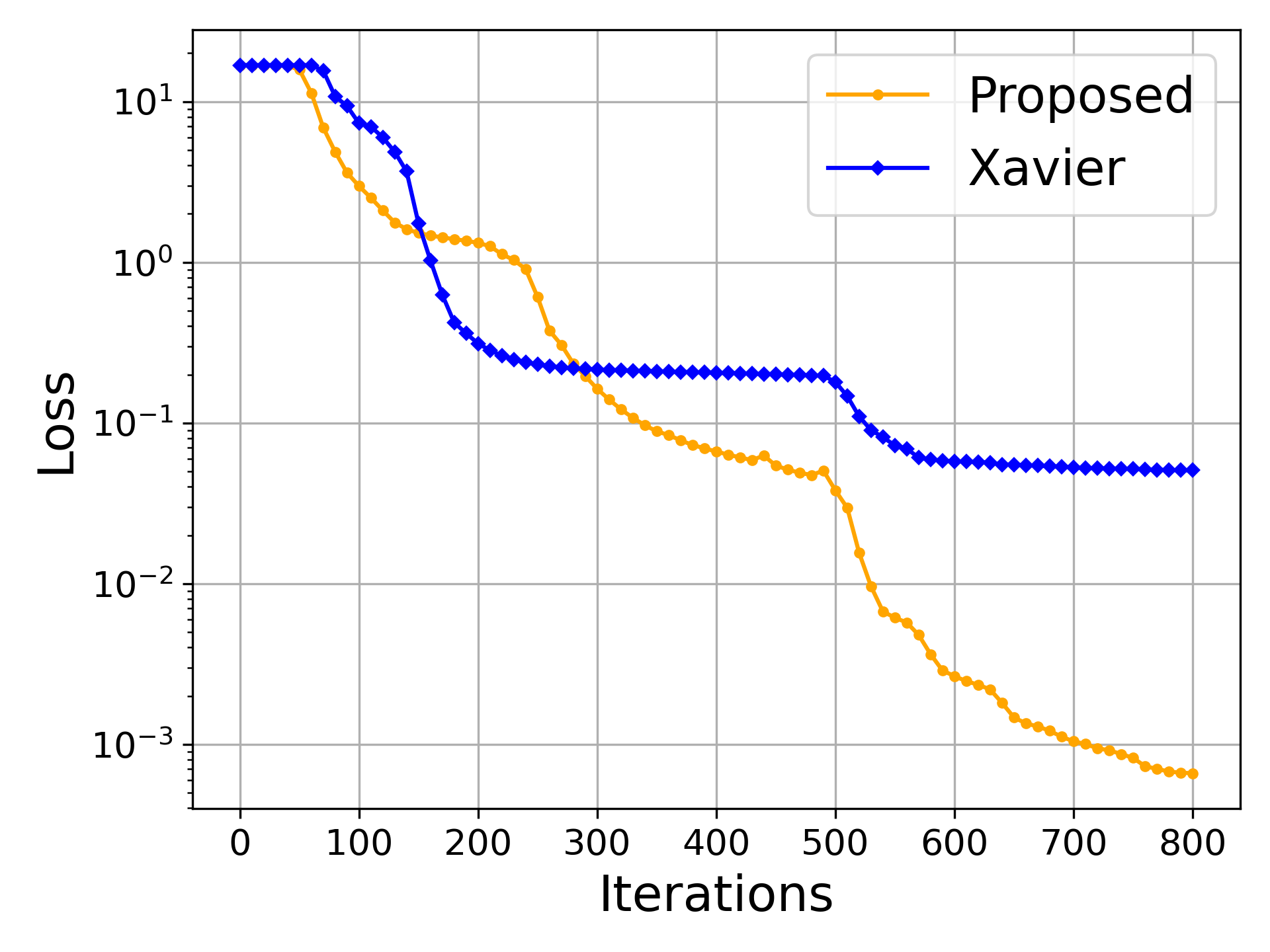}
    \caption{Diffusion equation}
\end{subfigure} &
\begin{subfigure}[b]{0.23\textwidth}
    \centering
    \includegraphics[width=\textwidth]{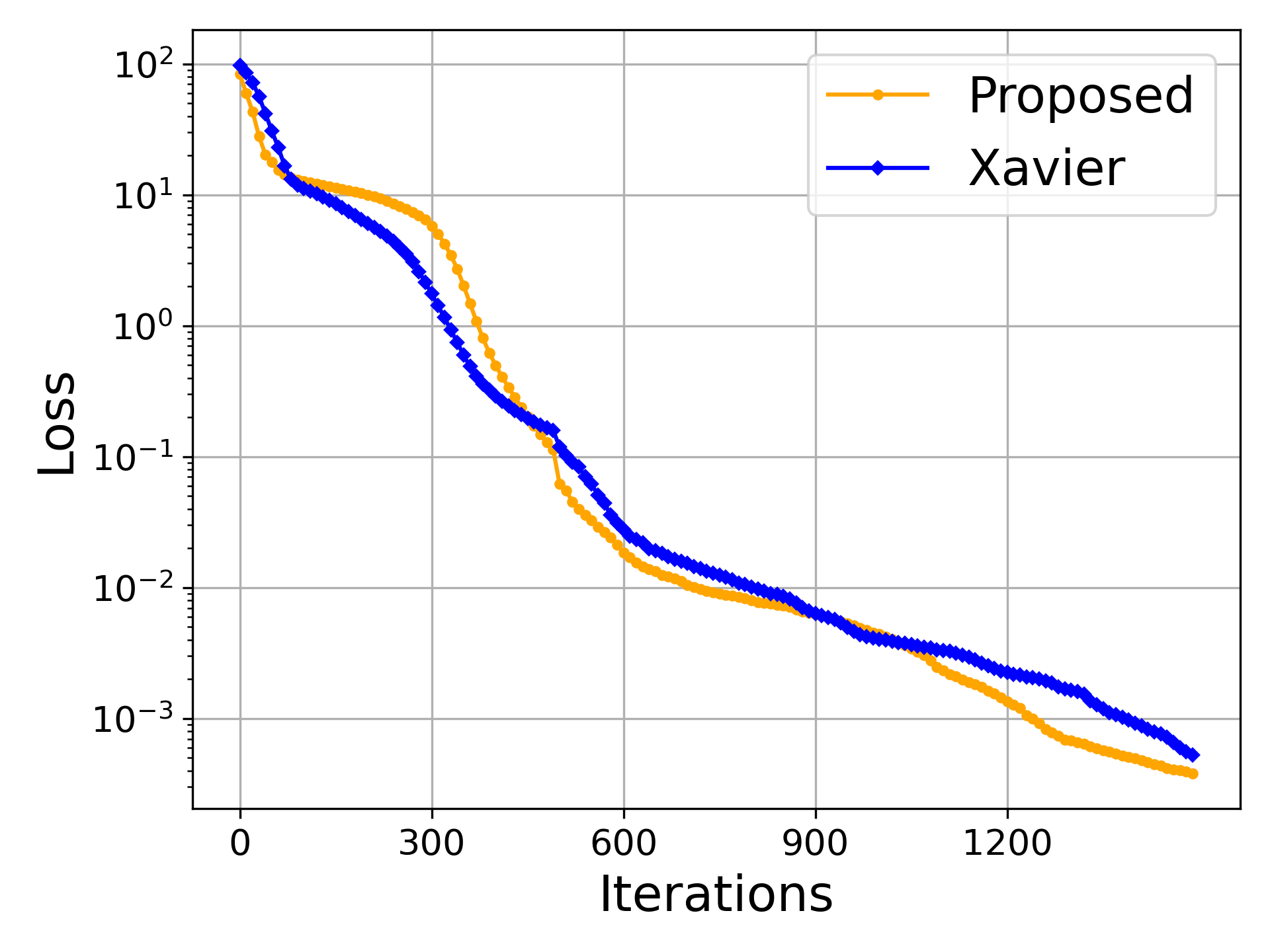}
    \caption{Poisson equation}
\end{subfigure} 
 &
\begin{subfigure}[b]{0.23\textwidth}
    \centering
    \includegraphics[width=\textwidth]{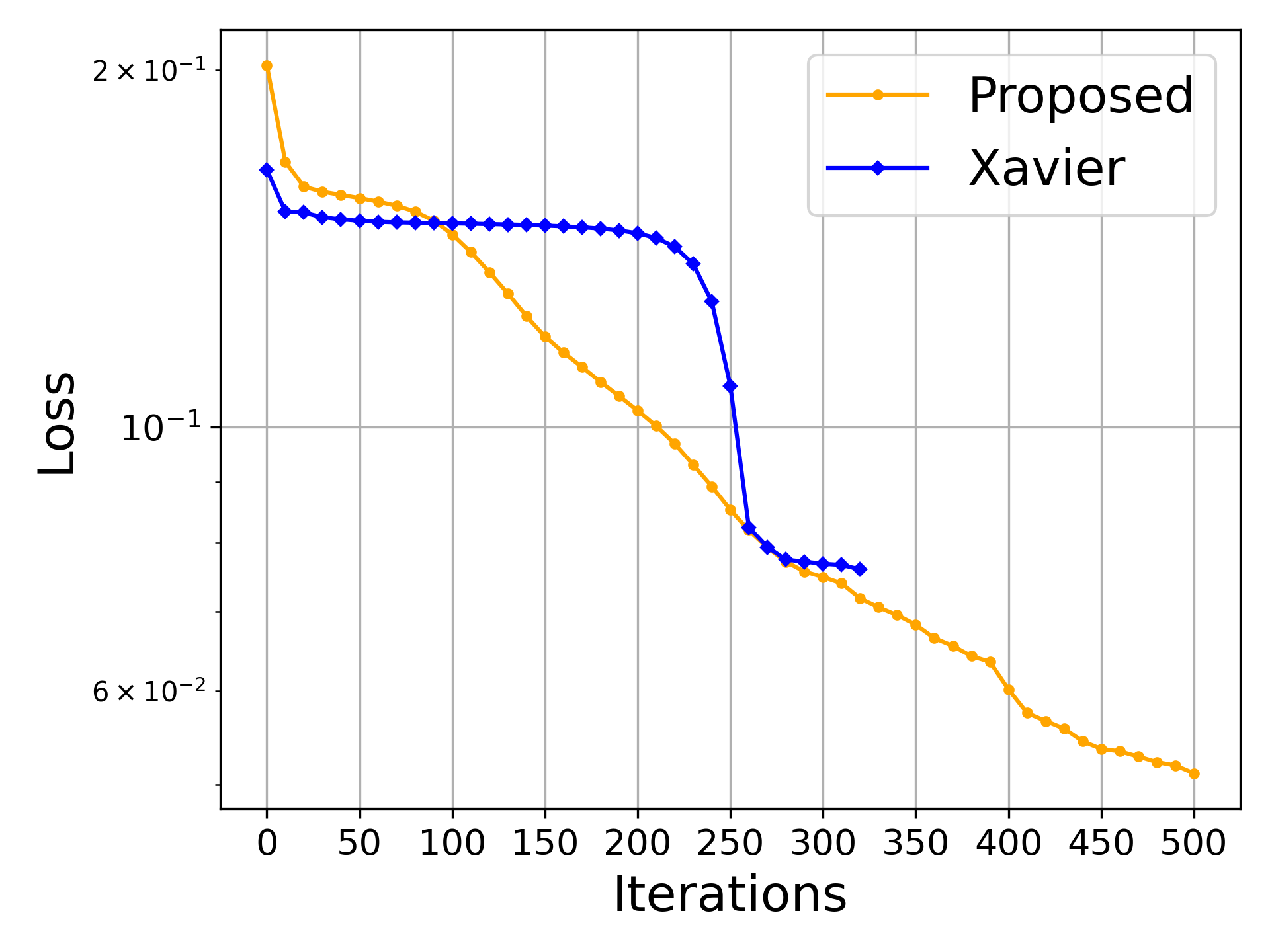}
    \caption{Burgers' equation}
\end{subfigure} 
 &
\begin{subfigure}[b]{0.23\textwidth}
    \centering
    \includegraphics[width=\textwidth]{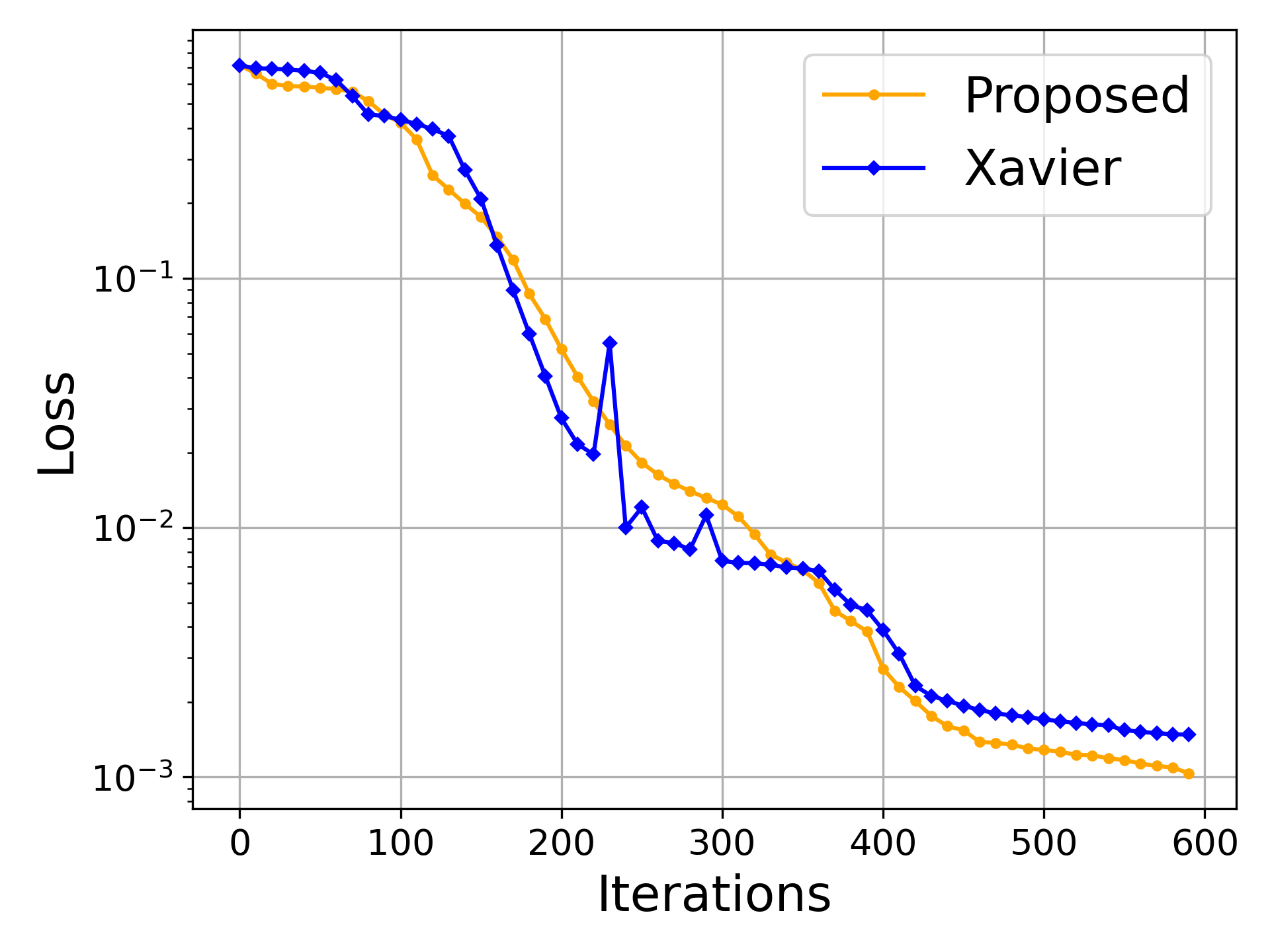}
    \caption{Allen-Cahn equation}
\end{subfigure} 
\end{tabular}
\caption{PINN loss for a Swish FFNN with (a) $20$ hidden layers, each containing $32$ nodes, and (b), (c) $3$ hidden layers, each containing $3$ nodes, and (d) $10$ hidden layers, each containing $32$ nodes.}
\label{fig:acti22}
\end{figure}

\subsection{$\sigma_z$ for Burgers' Equation}\label{app1}

\begin{figure}[h!]
\centering 
\begin{tabular}{cc}
\begin{subfigure}[b]{0.48\textwidth}
    \centering
    \includegraphics[width=\textwidth]{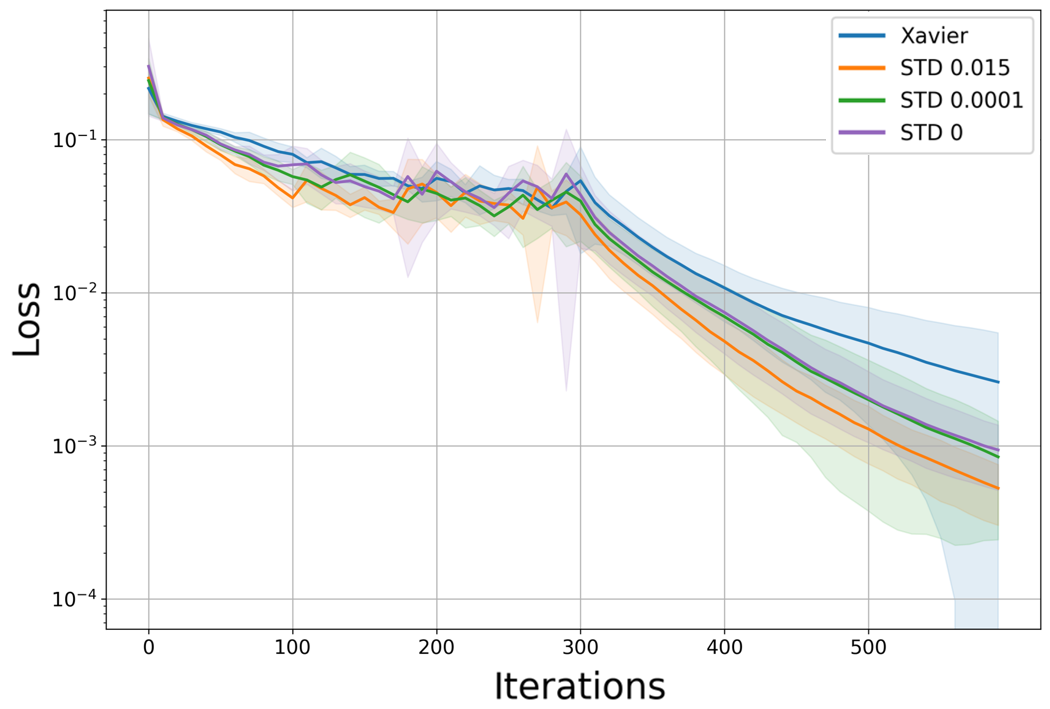}
    \caption{Same dimension}
\end{subfigure} &
\begin{subfigure}[b]{0.48\textwidth}
    \centering
    \includegraphics[width=\textwidth]{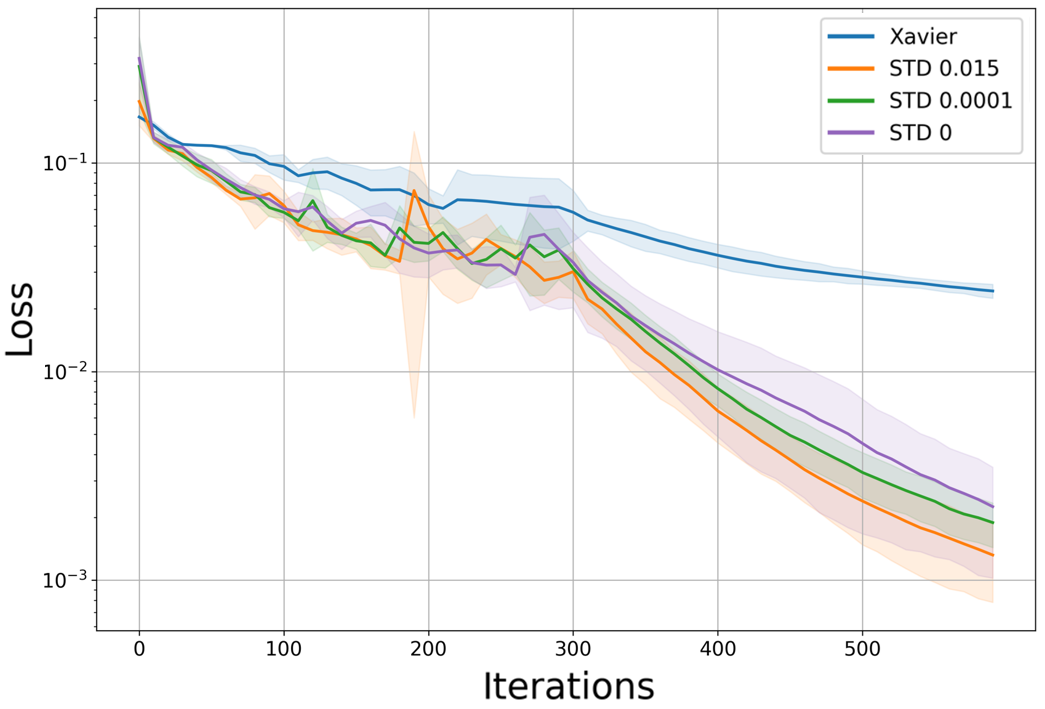}
    \caption{Varying dimensions}
\end{subfigure} 
\end{tabular}
\caption{Here, STD refers to \(\sigma_z\). (a) shows the PINN loss for the Burgers' equation, using an FFNN with 30 layers and 32 nodes in each hidden layer. (b) shows the PINN loss for an FFNN with 30 layers, where the hidden layers alternate between 64 and 32 nodes, repeated 15 times. }
\label{fig:Activation_distribution2}
\end{figure}

Here, we present the PINN loss for the Burgers' equation as a function of the initialization hyperparameter \( \sigma_z \). As shown in Figure~\ref{fig:Activation_distribution2}, the experimental results demonstrate that \( \sigma_z \) significantly impacts training. Therefore, selecting an appropriate \( \sigma_z \) is crucial for optimizing performance.

\subsection{Absolute error for Burgers' equation}\label{ae_burgers}

Figure~\ref{fig:burgers_8images} demonstrates that increasing the number of collocation points allows PINNs with the proposed initialization to predict solutions with lower absolute error compared to those using Xavier initialization. The proposed method is experimentally demonstrated to achieve faster convergence speed and higher data efficiency.

\begin{figure}[h!]
\centering 
\includegraphics[width=0.96\textwidth]{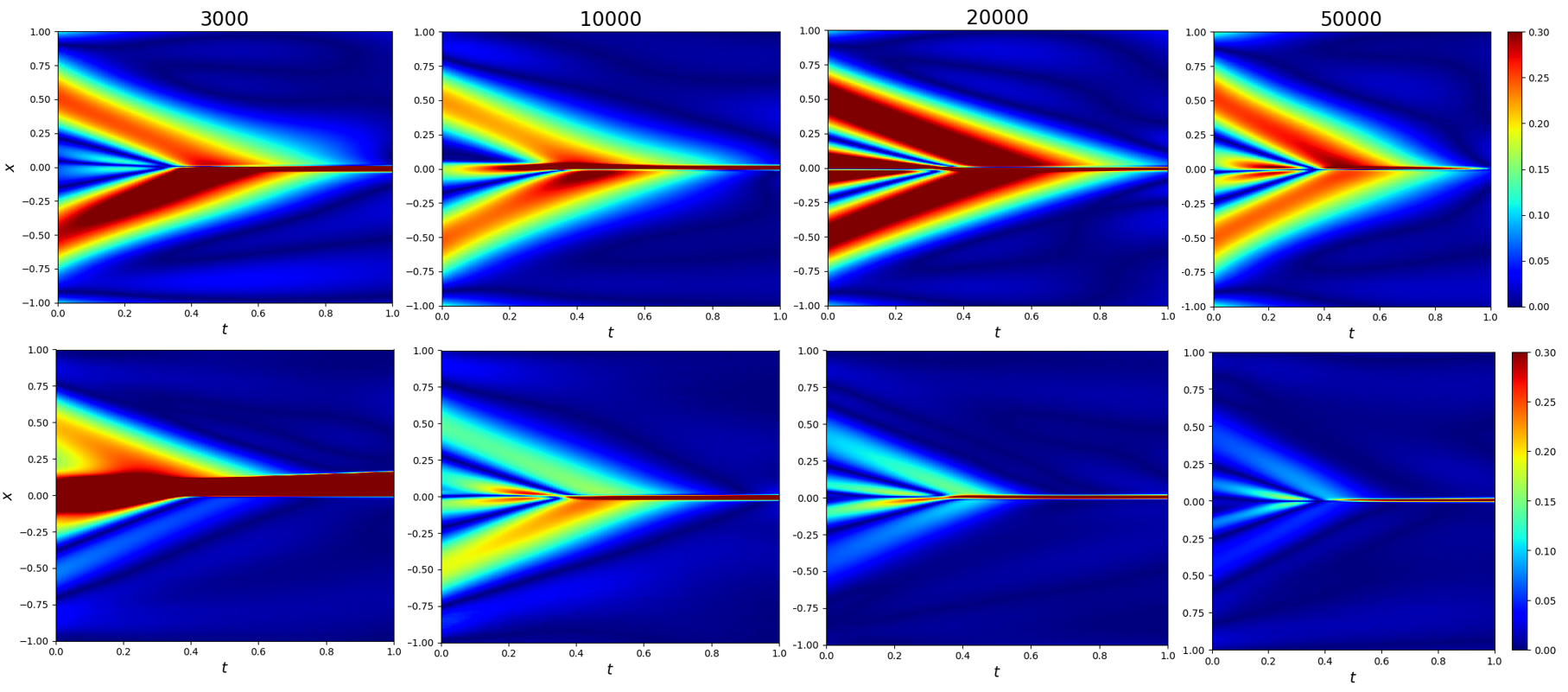}
\caption{
Absolute error between the exact solution and the PINN-predicted solution for the Burgers' equation with varying numbers of collocation points~(3000, 10000, 20000, 50000) using \textbf{(upper row)} Xavier and \textbf{(lower row)} the proposed initialization. The FFNN has 30 hidden layers~($32$ nodes each) and is trained for $300$ iterations using Adam followed by $300$ iterations using L-BFGS.}
\label{fig:burgers_8images}
\end{figure}

\section{Examples of the matrix $\mathbf{D}^{\ell}$}\label{initial}

An example of \( \mathbf{D}^{\ell} \) from the initialization methodology proposed in Section~\ref{3.2} is presented. In the heatmap of Figure~\ref{fig:fattall}, white represents the value $0$, while black represents the value $1$.

\begin{figure}[h!]
\centering 
\includegraphics[width=0.6\textwidth]{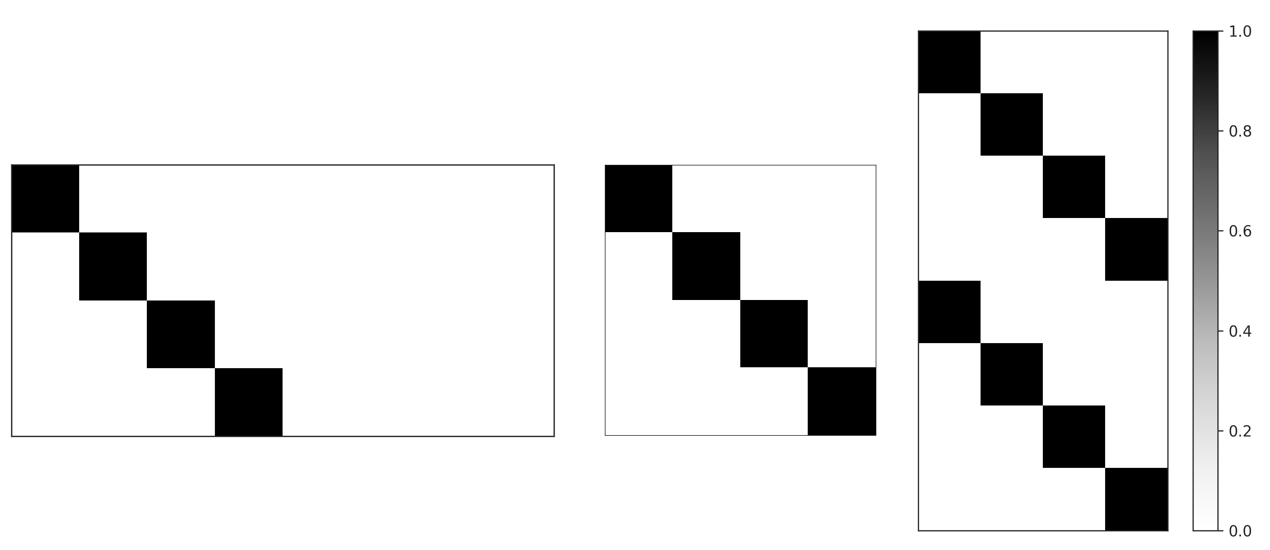}
\caption{Examples of the matrix \( \mathbf{D}^{\ell} \in \mathbb{R}^{N_{\ell} \times N_{\ell-1}} \) in Section~\ref{3.2} with \( N_{\ell} = 4, N_{\ell-1}=8 \) (left), \( N_{\ell} = 4, N_{\ell-1}=4 \) (middle), and \( N_{\ell} = 8, N_{\ell-1}=4 \) (right), respectively.}
\label{fig:fattall}
\end{figure}

\end{document}